\newtheorem{thm}{Theorem}
\newtheorem{ass}{Assumption}
\newtheorem{lem}{Lemma}
\newtheorem{cor}{Corollary}
\newtheorem{property}{Property}
\newcommand*{\QEDA}{\hfill\ensuremath{\blacksquare}}%
\definecolor{airforceblue}{rgb}{0.36, 0.54, 0.66}
\definecolor{frenchblue}{rgb}{0.0, 0.45, 0.73}
\begin{document}
%
% paper title
% can use linebreaks \\ within to get better formatting as desired
% Do not put math or special symbols in the title.
%\title{Disturbance-observer-based Model-free Friction Observers for Flexible Joint Robots}% with Torque Measurements}
\title{Model-free Friction Observers for\\Flexible Joint Robots with Torque Measurements}
%
%
% author names and IEEE memberships
% note positions of commas and nonbreaking spaces ( ~ ) LaTeX will not break
% a structure at a ~ so this keeps an author's name from being broken across
% two lines.
% use \thanks{} to gain access to the first footnote area
% a separate \thanks must be used for each paragraph as LaTeX2e's \thanks
% was not built to handle multiple paragraphs
%

\author{Min~Jun~Kim, Fabian~Beck, Christian~Ott, and Alin~Albu-Sch\"affer% <-this % stops a space
\thanks{The authors are with the Robotics and Mechatronics Center, German Aerospace Center (DLR),  Germany. Email: 
{\tt\small minjun.kim@dlr.de}}% <-this % stops a space
}

\maketitle

% As a general rule, do not put math, special symbols or citations
% in the abstract or keywords.
\begin{abstract}
This paper tackles a friction compensation problem without using a friction model. The unique feature of the proposed friction observer is that the nominal motor-side signal is fed back into the controller instead of the measured signal. By doing so, asymptotic stability and passivity of the controller are maintained. Another advantage of the proposed observer is that it provides a clear understanding for the stiction compensation which is hard to be  captured in model-free approaches. This allows to design observers that do not overcompensate for the stiction. The proposed scheme is validated through simulations and experiments.
\end{abstract}

% Note that keywords are not normally used for peerreview papers.
\begin{IEEEkeywords}
Flexible joint robots, friction observer, disturbance observer, passivity-based stiction compensation
\end{IEEEkeywords}

% For peer review papers, you can put extra information on the cover
% page as needed:
% \ifCLASSOPTIONpeerreview
% \begin{center} \bfseries EDICS Category: 3-BBND \end{center}
% \fi
%
% For peerreview papers, this IEEEtran command inserts a page break and
% creates the second title. It will be ignored for other modes.
\IEEEpeerreviewmaketitle

\section{Introduction}
\label{section1}
% The very first letter is a 2 line initial drop letter followed
% by the rest of the first word in caps.
% 
% form to use if the first word consists of a single letter:
% \IEEEPARstart{A}{demo} file is ....
% 
% form to use if you need the single drop letter followed by
% normal text (unknown if ever used by IEEE):
% \IEEEPARstart{A}{}demo file is ....
% 
% Some journals put the first two words in caps:
% \IEEEPARstart{T}{his demo} file is ....
% 
% Here we have the typical use of a "T" for an initial drop letter
% and "HIS" in caps to complete the first word.

%\IEEEPARstart{T}{his} 
Although a joint friction compensation is one of the most fundamental problems in robotics control, it is still an open problem; see e.g., \cite{verbert2016adaptive}. There are several branches in the friction observer studies. \cite{olsson1996observer, johanastrom2008revisiting,maged2019dynamic} proposed friction observers using the LuGre model that can describe most of physical phenomena of the friction. Model-based approaches will undoubtedly provide the best friction compensation performance, but at the cost of increased complexity. \cite{kaneko1990motion, ruderman2015observer} proposed model-free observers for motion control applications, but these may eliminate the interaction capability which is important in modern robotics.

To achieve friction compensation while preserving interaction capability, equipping the robot joints with a joint torque sensor (JTS) would be helpful because it allows to employ the flexible joint robot (FJR) model. In the FJR model, the overall robot dynamics are divided into the motor-side and link-side dynamics. Conceptually speaking, by applying a friction observer only to the motor-side dynamics, the link-side is still open to the interaction. Under the Spong's assumption \cite{spong1990modeling} which is valid when, for instance, the gear reduction ratio is high enough, the FJRs can be mathematically modeled as follows.
\begin{align}
\label{eq:link_dyn}
M(q)\ddot{q} + C(q,\dot{q})\dot{q} + g(q) &= \tau_j + \tau_{ext}, \\
\label{eq:motor_dyn}
B \ddot{\theta} + \tau_j &= \tau_m + \tau_f, \\
\label{eq:torque}
\tau_j &= K_j (\theta - q),
\end{align} 
where $M, C, g$ represent link-side inertia, Coriolis/centrifugal, gravity matrix/vector and $B$ represents motor-side inertia matrix. $q$, $\theta$ respectively denote the link-side and motor-side variables, and $K_j$, $\tau_j$ respectively denote the joint stiffness and joint torque which is measurable using JTS. $\tau_{ext}$ is the external torque resulting from interation, $\tau_m$ is the motor command, and $\tau_f$ is the joint friction to be compensated for.

An important remark here is that most of the significant friction can be included in the motor-side dynamics (\ref{eq:motor_dyn}) because the JTS is typically installed after the gear reduction. Therefore, it is reasonable to apply the friction observer to the motor-side dynamics which are governed by a 2nd order linear ordinary differential equation (w.r.t. $\theta$). Namely, we can apply linear control techniques to compensate for the friction. In this paper, friction observers are designed based on this observation.

\subsection{Related Work}
\label{sec:related_work}

The idea of applying friction compensation on the motor-side was realized in \cite{kaneko1990motion, zhang1997control,park2007disturbance} using disturbance observer (DOB) technique. In a very early study \cite{kaneko1990motion}, however, the joint torque information was not taken into account, meaning that the interaction on the link-side was treated as a disturbance. However, in robotics applications, it can be beneficial to close the loop around the motor-side dynamics using $\tau_j$  because then the link-side dynamics can interact with the environment through $\tau_{ext}$. \cite{zhang1997control} considered the joint torque information in the observer design, but the analysis was limited to a single-link robot. \cite{park2007disturbance} proposed an observer for multi-link robots, but the friction model was assumed to be linear and known.

To the best of the authors' knowledge, the approach proposed in \cite{le2008friction} was the first model-free friction observer for multi-link robotic systems (see Fig. \ref{fig:friction_observers}b). In this approach, the resulting observed value corresponds to the real friction smoothed by a 1st order low-pass filter (LPF). Despite successful experimental validation, however, theoretical analysis was not complete. The main challenge is the fact that the observer dynamics may break stability/passivity of the controller. Later, \cite{kim2014robust} and \cite{kim2015disturbance} proposed Fig. \ref{fig:friction_observers}c to establish a theoretically sound friction observer that guarantees stability of the whole system consisting of the friction observer dynamics and the FJR dynamics (\ref{eq:link_dyn})-(\ref{eq:torque}).\footnote{
To be precise, the scope of \cite{kim2014robust,kim2015disturbance} was about DOB-based control structures. The DOB becomes the friction observer when the motor inertia is known.
}

\begin{figure}[]
	\centering
	\subfigure[FJR control with the proposed friction observer]
	{\includegraphics[scale=0.28]{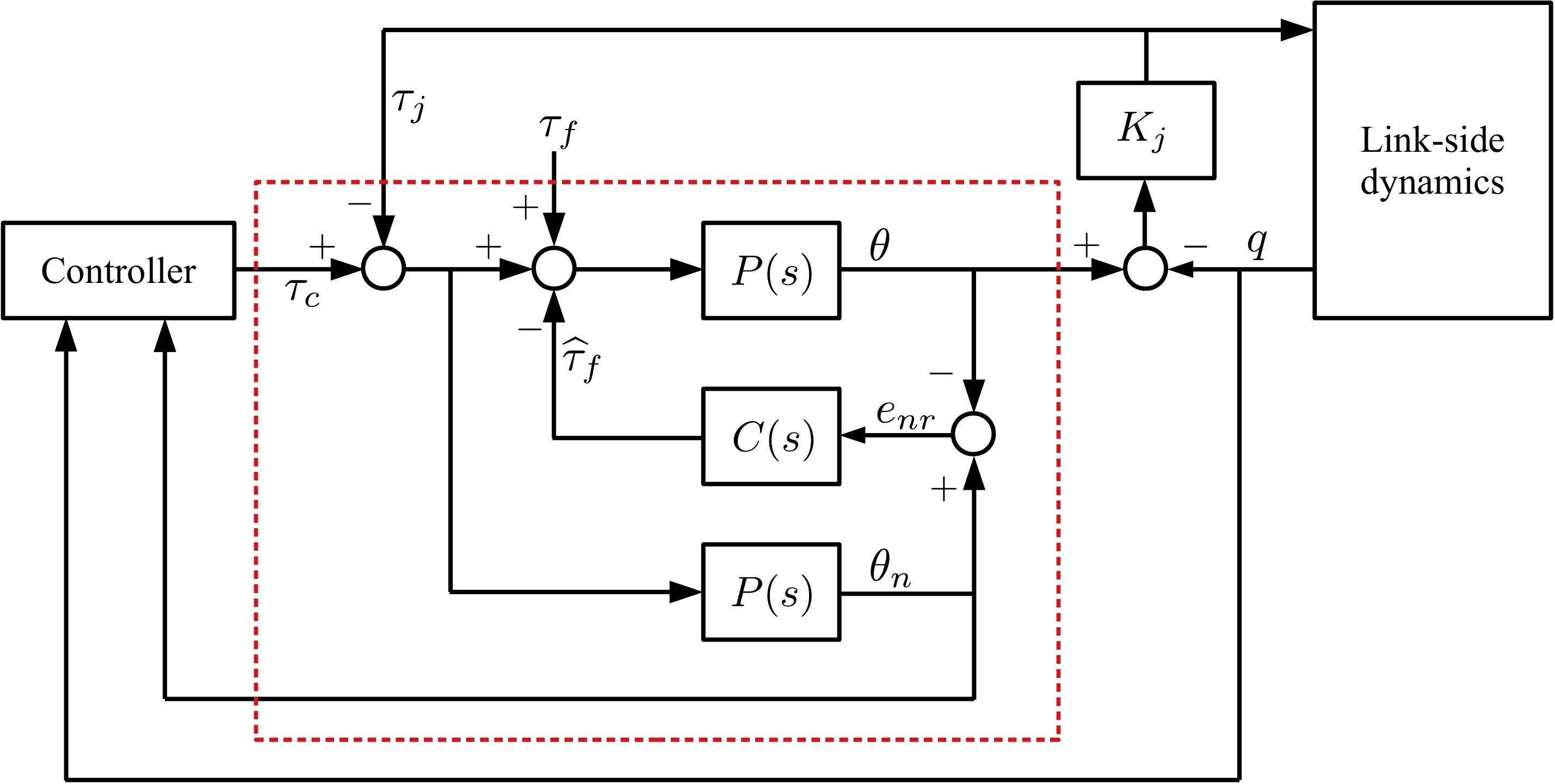}} \\
	\subfigure[Friction observer in \cite{le2008friction}]
	{\includegraphics[scale=0.24]{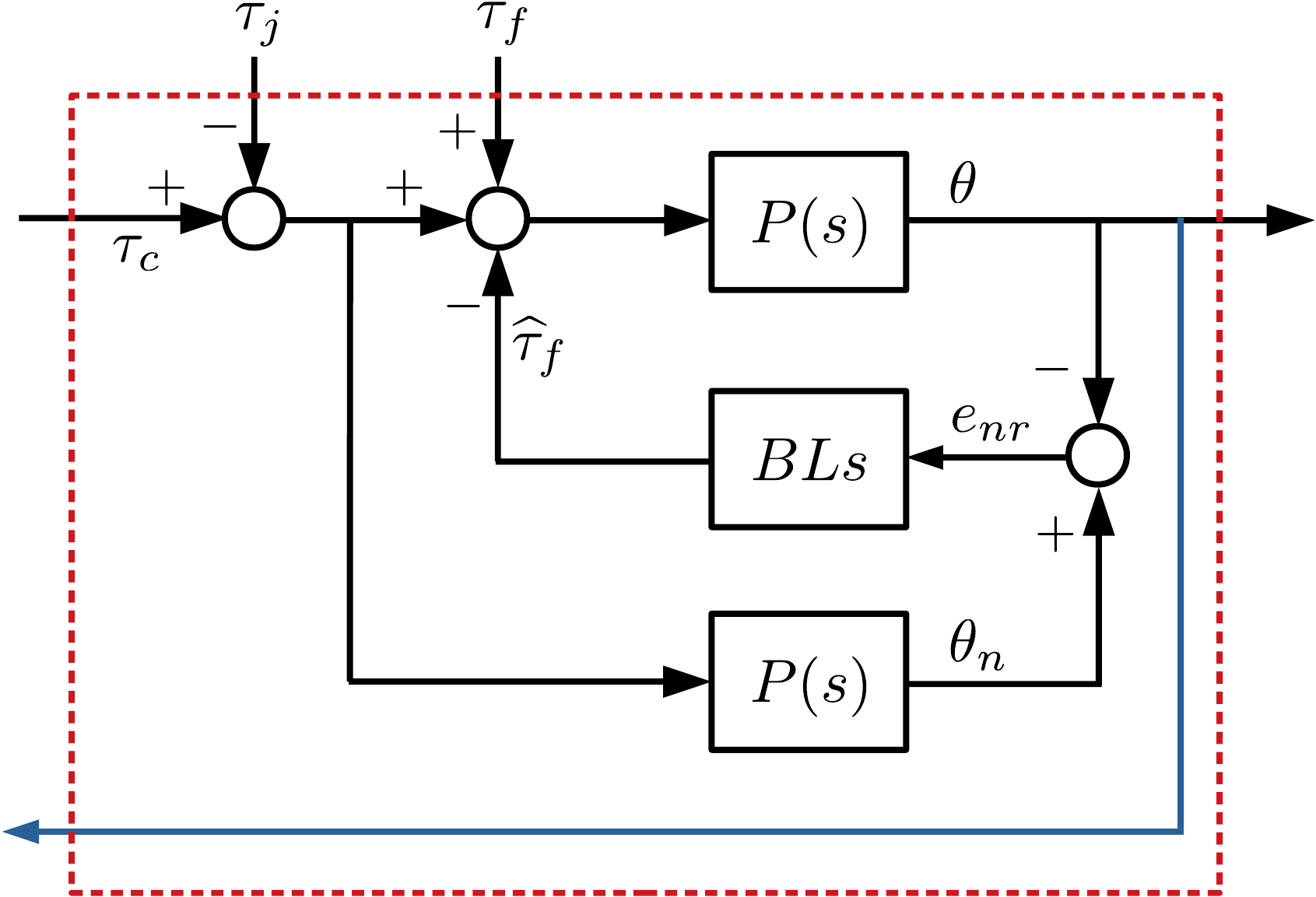}}
	\subfigure[Friction observer in \cite{kim2015disturbance}]
	{\includegraphics[scale=0.24]{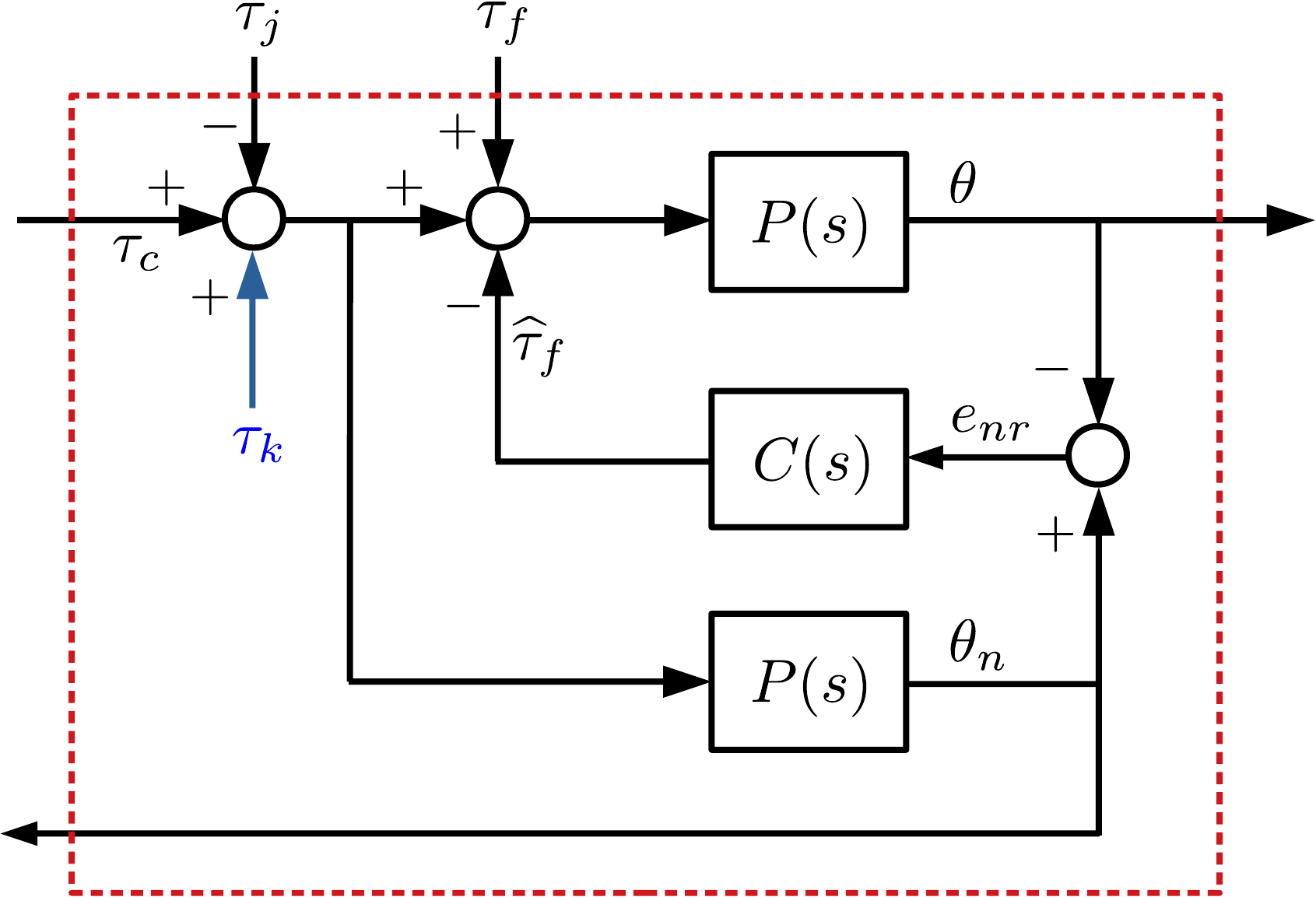}}
	\caption{ The dashed red boxes represent the friction observers, where $P(s)=1/Bs^2$ is the motor-side dynamics and $C(s)$ is the friction compensator. (a) The FJR control with the proposed friction observer.  (b) Friction observer proposed in \cite{le2008friction} where no stability proof is provided. (c) Friction observer proposed in \cite{kim2015disturbance}. Nominal motor signal ($\theta_n$) feedback makes the closed-loop system stable together with an auxiliary control input $\tau_k$. The limitation is that $\tau_k$ cannot be implemented around the equilibrium point. In (b) and (c), differences with the proposed observer (a) are highlighted with blue color.}
	\label{fig:friction_observers} 
\end{figure}

The main feature of this structure is that the nominal motor signal $\theta_n$ (instead of the measured signal $\theta$) is fed back into the controller. In Fig. \ref{fig:friction_observers}, $\theta_n$  can be thought of as the nominal motor signal because it is the outcome of the friction-free motor dynamics $P(s)$. Although asymptotic stability can be shown theoretically, the approach in Fig. \ref{fig:friction_observers}c still has a limitation that it contains an intractable input $\tau_k$ which cannot be implemented around the equilibrium point. Therefore, it was neglected in the experimental validation under the naive expectation that the influence is negligible \cite{kim2014robust}. Interestingly, the experiments were successful, and asymptotically stable behavior was shown. Nevertheless, it is questionable if $\tau_k$ can be neglected in general. This paper, therefore, shows that asymptotic stability can be guaranteed without $\tau_k$. Moreover, if the controller is designed to satisfy passivity of an input-output pair $(\tau_{ext},\dot{q})$, the friction observer preserves this property.

At this point, it is worth mentioning that the aforementioned approaches share a common keyword DOB. By virtue of the DOB-based structures, the outcomes of the observers are the low-pass filtered value of the real friction. However, the LPF property does not provide enough understanding for the observer's behavior, especially when the robot is stuck in the stiction. Stiction compensation needs a special treatment as it has a very complicated characteristics which is hard to be captured in the model-free approaches.

Indeed, one thing commonly missed in the model-free friction observer studies is the stiction compensation. This paper also neglects the dynamic behavior of the stiction in the beginning.  By doing so, it can be shown that the controlled system converges into the stiction region. However, the friction observers tend to generate energy during the stiction compensation, which may result in an oscillatory robot motion around the desired point. The proposed observer provides a better understanding on this phenomenon, and allows us to avoid the energy generation using passivity theory.

\begin{figure}
	\centering
	{\includegraphics[scale=0.45]{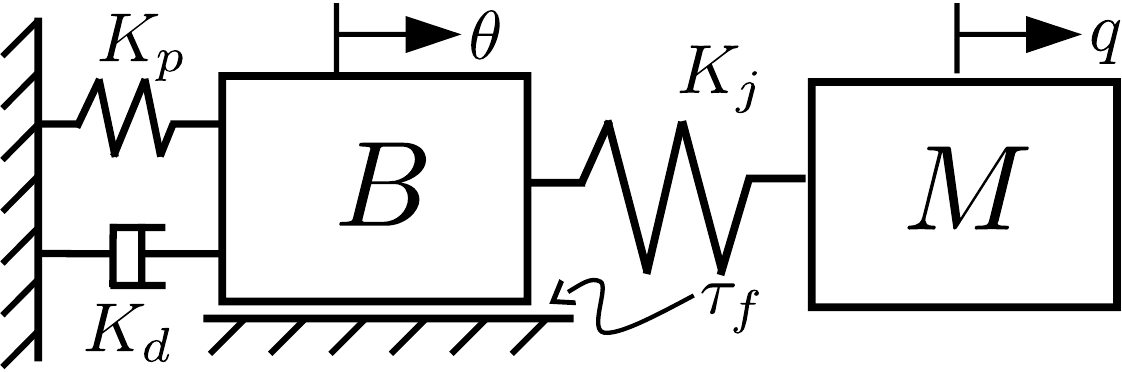}}
	\caption{A gravity-free single-link FJR with the motor-side PD controller. Friction observer is applied to compensate for the friction $\tau_f$ acting on the motor-side.}
	\label{fig:stiction_example} 
\end{figure}

\subsection{Scope and contribution of the paper}
\label{sec:scope_contribution}

This paper focuses mainly on the regulation scenario in which the robot eventually is stationary. Nevertheless, the tracking scenario is also discussed for the sake of completeness. We focus on the regulation case because the convergence can be argued only when the robot is supposed to be static. Convergence is hard to be concluded for the tracking case because the friction is observed by the relation of a LPF which always has a phase lag. Instead, we can claim practical stability which might be sufficient for tracking scenarios in practical point of view. 

Within the regulation scenario, this paper proposes friction observers that can be applied to controllers designed to be asymptotically stable for friction-free FJRs. Three main contributions are summarized as follows. First, it will be shown that asymptotic stability of the controller is preserved. Second, if, in addition, the controller is designed to satisfy passivity of $(\tau_{ext}, \dot{q})$, this is preserved as well. Third, a solution to prevent energy generation during the stiction compensation is presented. We would like to underline that, in the third contribution, the stiction region around the desired point is of interest in this paper.

\subsection{Organization of the paper}

The proposed friction observer is introduced in Section \ref{sec:friction_obs_design} with a theoretical justification in Section \ref{sec:theory}. Section \ref{sec:discussion} presents several features of the proposed friction observer as well as interpretations for the required technical conditions of the theory. The proposed features are validated in Section \ref{sec:validation} through simulations and experiments. Finally, Section \ref{sec:conclusion} concludes the paper.

\section{Model-Free Friction Observers for FJR}
\label{sec:friction_obs_design}

\subsection{Overview of the proposed friction observers}
\label{sec:proposed_fo}

This section introduces the proposed friction observers shown in Fig. \ref{fig:friction_observers}a without rigorous mathematical treatments which will be introduced in the subsequent sections.  Given a controller $\tau_c$ which is designed to be asymptotically stable for the nominal friction-free FJRs, the friction observer generates $-\widehat{\tau}_f$ to compensate for the friction $\tau_f$ in the motor-side dynamics (\ref{eq:motor_dyn}). Namely, the resulting motor command is
\begin{align}
\tau_m = \tau_c - \widehat{\tau}_f.
\label{eq:motor_command}
\end{align}

The following well-known motor-side PD controller \cite{tomei1991simple} is used as an example throughout the paper.
\begin{align}
	\tau_c(\theta, \dot{\theta}) = -K_p (\theta - \theta_d) - K_d \dot{\theta} + g(q_d),
	\label{eq:pd_controller}
\end{align}	
where $K_p$, $K_d$ are PD gains,  and $\theta_d = q_d + K_j^{-1}g(q_d)$ with the desired value $q_d$.  This controller is known to satisfy asymptotic stability and passivity of $(\tau_{ext},\dot{q})$ for sufficiently large $K_p$. When the friction observer is applied, as shown in Fig. \ref{fig:friction_observers}a, the controller should be defined using nominal signals:
\begin{align}
	\tau_c(\theta_n, \dot{\theta}_n) = -K_p (\theta_n - \theta_d) - K_d \dot{\theta}_n + g(q_d).
	\label{eq:pd_controller_with_nominal}
\end{align}	
In Fig. \ref{fig:friction_observers}a, $C(s)$ represents the friction observer. The following two observers are introduced in this paper.
\begin{align}
\label{eq:PID-type}
\text{PID-type:}& \;\; C(s) = - BL \left( s + L_p + \frac{L_i}{s} \right) \; \text{with} \; L_p^2> 2L_i, \\
\label{eq:PD-type}
\text{PD-type:}& \;\; C(s) = - BL \left( s + L_p \right),
\end{align}
where $L,L_p,L_i>0$ are observer gains.

It is important to note that, due to the additional observer dynamics, stability of the overall system may not be guaranteed even if the controller is designed to be stable for the nominal systems. This paper shows that the observer preserves asymptotic stability and/or passivity of the controller if the observer gain $L$ is sufficiently large for the regulation case; see Theorem \ref{thm:asymp_stable} in Section \ref{sec:stability} for the required technical conditions. When the tracking is of interest, friction-free behavior is achieved as the observer gain $L$ increases, but the convergence cannot be claimed for finite $L$; see Section \ref{sec:discuss_conditions} and Appendix.

One limitation of Theorem \ref{thm:asymp_stable} is the fact that dynamics of the stiction friction are neglected. Roughly speaking, the value of stiction friction has the same amount as net applied force/torque with the opposite sign. The following example motivates an additional analysis for the stiction compensation.

\underline{\it Motivating example:} Consider the gravity-free mass shown in Fig. \ref{fig:stiction_example} with the initial condition $\theta(0)=\dot{\theta}(0)=0$ and $q(0)=\dot{q}(0)=0$ (so that $\tau_j(0)=0$). The PD controller (\ref{eq:pd_controller}) or (\ref{eq:pd_controller_with_nominal}) is applied with $K_p=50$, $K_d=5$, and $\theta_d=0.01$. Let us assume that the maximum stiction value is $1.5 \mathrm{N}$. 
Because the PD control input (which is $50 \times 0.01 = 0.5 \mathrm{N}$ at the beginning) is not strong enough to break the stiction $1.5 \mathrm{N}$, the mass does not move due to the friction $-0.5 \mathrm{N}$. When the friction observer is applied, it produces $0.5 \mathrm{N}$ (i.e., $\widehat{\tau}_f=-0.5 \mathrm{N}$) to compensate for the friction. However, at the same time, the resulting friction increases to $-1 \mathrm{N}$ which is the net applied force. As a consequence, the friction compensation action will also increase to $1 \mathrm{N}$ which results in the net force $1.5 \mathrm{N}$ that can break the stiction. However, if the control error remains due to, for instance, stick-slip effect, the above mentioned procedure may be repeated. \QEDA

This example demonstrates that the friction observers tend to generate energy during the stiction compensation, and may result in an oscillatory motion. We would like to underline again that the stiction region around the desired regulating point is of main interest in this paper. An analysis for the stiction region that the robot enters instantly due to the change of velocity sign during the motion is out of scope.  

According to Theorem \ref{thm:passivity} in Section \ref{sec:behavior_in_stiction}, it is guaranteed that the PD-type observer does not generate the energy in the stiction region, whereas the PID-type may generate; see also Fig. \ref{fig:stiction_comp_stiction}a. However, we do not conclude which one is better in general. Section \ref{sec:validation} shows that PD- and PID-type observers have different characteristics with respective pros and cons. In practice, therefore, one should carefully design a friction observer depending on the application and hardware used.

\subsection{State-space representation of the overall dynamics}

To express the dynamics in state-space form, the most straightforward choice for the states would be $\theta_n$, $\dot{\theta}_n$, $\theta$, $\dot{\theta}$, $q$, and $\dot{q}$. In this paper, however, $e_{nr}=\theta_n - \theta$ and $\dot{e}_{nr}$  will be used instead of $\theta$ and $\dot{\theta}$, and will be collectively represented as $x_{nr}$.  The definition of $x_{nr}$ will depend on the friction observer $C(s)$ design. For example, when (\ref{eq:PID-type}) is used, one simple choice is $x_{nr}= [\int e_{nr}^T, \; e_{nr}^T, \; \dot{e}_{nr}^T]^T$. Similarly,  the state related to $\theta_n$ and $q$ will be collectively represented as $x_{n,q}$ of which the definition depends on the controller $\tau_c$. With the motor-side PD controller (\ref{eq:pd_controller_with_nominal}),  $x_{n,q}= [(\theta_n - \theta_d)^T, \; \dot{\theta}_n^T, \; (q - q_d)^T, \; \dot{q}^T]^T$ is a typical choice.

The overall dynamics can be expressed as 
\begin{align}
\label{eq:deriv_link}
M(q) \ddot{q} + C(q,\dot{q})\dot{q} + g(q) &= K_j(\theta-q) + \tau_{ext}, \\
\label{eq:deriv_nom_motor}
B\ddot{\theta}_n + K_j(\theta-q)& = \tau_c(x_{n,q}), \\
\label{eq:deriv_diff_dyn}
B\ddot{e}_{nr} &= \widehat{\tau}_f(x_{nr}) - \tau_f.
\end{align}
(\ref{eq:deriv_diff_dyn}) is called difference dynamics and can be obtained by subtracting (\ref{eq:motor_dyn}) from (\ref{eq:deriv_nom_motor}); recall also $\tau_m=\tau_c-\widehat{\tau}_f$ in (\ref{eq:motor_command}). Since $\theta$ is not a state, let us rewrite (\ref{eq:deriv_link})-(\ref{eq:deriv_nom_motor}) as
\begin{align}
\label{eq:nom_link_re}
M(q) \ddot{q} + C(q,\dot{q})\dot{q} + g(q) &= K_j(\theta_n-q) - K_j e_{nr} + \tau_{ext}, \\
\label{eq:nom_motor_re}
B\ddot{\theta}_n + K_j(\theta_n-q)& = \tau_c(x_{n,q}) + K_j e_{nr}.
\end{align}

In the state-space form, (\ref{eq:nom_link_re})-(\ref{eq:nom_motor_re}) can be represented as
\begin{align}
\label{eq:nq_dyn_ss}
\dot{x}_{n,q} = f_{n,q}(x_{n,q})+g_{n,q}(x_{nr}) + h_{n,q}(x_{nr})\tau_{ext}
\end{align}
with properly defined $f_{n,q}$, $g_{n,q}$, and $h_{n,q}$. Here, $g_{n,q}$ represents the perturbation caused by $K_je_{nr}$.  The state-space representation of (\ref{eq:deriv_diff_dyn}) is
\begin{align}
\dot{x}_{nr} = A_{nr} x_{nr} + B_{nr} \widehat{\tau}_f + B_{nr} w,
\label{eq:diff_ss}
\end{align}
with properly defined constant matrices $A_{nr}$ and $B_{nr}$.  $w$ represents the perturbation which may contain state dependent terms in addition to the friction $\tau_f$. By allowing $w$ to have state-dependent terms, the design procedure of the friction observer may become easier as we have more freedom.
\begin{lem}
	\label{lem:w_bound}
	Assume that $w$ is bounded by 
	\begin{align}
	||w|| \leq b_1 ||x_{nr}|| + b_2 ||\dot{\theta}|| + b_3 ||\dot{\theta}_n|| +  b_4,
	\label{eq:w_bound_condition}
	\end{align}
	for some $b_i \geq 0$. Then, $||w||^2$ can be bounded by
	\begin{align}
	||w||^2 \leq c_1 ||x_{nr}||^2 + c_2 ||x_{nr}|| +c_3 ||\dot{\theta}_n||^2 + c_4,
	\label{eq:w_sq_bound_condition}
	\end{align}
	for some $c_i \geq 0$.
\end{lem}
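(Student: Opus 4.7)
The plan is first to reduce the hypothesis so that the dependence on $||\dot{\theta}||$ is expressed through $||\dot{\theta}_n||$ and $||x_{nr}||$ alone. By construction $\dot{e}_{nr}=\dot{\theta}_n-\dot{\theta}$ is a component of $x_{nr}$, so the triangle inequality gives $||\dot{\theta}||\le ||\dot{\theta}_n||+||\dot{e}_{nr}||\le ||\dot{\theta}_n||+||x_{nr}||$. Substituting this into (\ref{eq:w_bound_condition}) yields a bound of the form
\begin{equation*}
||w|| \le \alpha\,||x_{nr}|| + \beta\,||\dot{\theta}_n|| + \gamma,
\end{equation*}
with $\alpha=b_1+b_2$, $\beta=b_2+b_3$, $\gamma=b_4$, all nonnegative.

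Next I would square both sides and expand. Three of the resulting six terms, $\alpha^2||x_{nr}||^2$, $\beta^2||\dot{\theta}_n||^2$, and $\gamma^2$, already match entries of (\ref{eq:w_sq_bound_condition}); the cross term $2\alpha\gamma\,||x_{nr}||$ also fits, contributing to $c_2$. What needs to be dealt with are the mixed term $2\alpha\beta\,||x_{nr}||\,||\dot{\theta}_n||$ and the linear term $2\beta\gamma\,||\dot{\theta}_n||$, neither of which is allowed on the right-hand side of (\ref{eq:w_sq_bound_condition}). Young's inequality $2ab\le a^2+b^2$ dispatches both: applied with $a=\alpha||x_{nr}||$ and $b=\beta||\dot{\theta}_n||$ it absorbs the mixed term into quadratic contributions, and applied with $a=\beta||\dot{\theta}_n||$ and $b=\gamma$ it splits the linear $||\dot{\theta}_n||$ term into a quadratic plus a constant. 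Collecting yields (\ref{eq:w_sq_bound_condition}) with, for instance, $c_1=2\alpha^2$, $c_2=2\alpha\gamma$, $c_3=3\beta^2$, and $c_4=2\gamma^2$, all manifestly nonnegative.

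The only conceptual subtlety worth flagging is the asymmetry of the target bound: a linear term in $||x_{nr}||$ is admitted, but no linear term in $||\dot{\theta}_n||$ is. That is precisely what forces the one-sided use of Young's inequality on the $2\beta\gamma\,||\dot{\theta}_n||$ contribution, while $2\alpha\gamma\,||x_{nr}||$ can be carried through verbatim. Once this asymmetry is recognized, the proof reduces to bookkeeping, so I do not foresee a genuine obstacle.
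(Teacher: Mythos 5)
Your proof is correct and follows essentially the same route as the paper: eliminate $\dot{\theta}$ via $\dot{\theta}=\dot{\theta}_n-\dot{e}_{nr}$ (with $\dot{e}_{nr}$ a component of $x_{nr}$), square, and split the disallowed mixed and linear terms with Young's inequality; your explicit constants $c_1=2\alpha^2$, $c_2=2\alpha\gamma$, $c_3=3\beta^2$, $c_4=2\gamma^2$ check out. The paper's proof is just a terser statement of the same bookkeeping.
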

\begin{proof}
	First, use $\dot{\theta}=\dot{\theta}_n - \dot{e}_{nr}$ to get rid of $\dot{\theta}$ which is not a state of the closed-loop dynamics. Then, (\ref{eq:w_sq_bound_condition}) is obtained by algebraic calculation. When squaring $||w||$, a mixed term $||a||\cdot ||b||$ can be split using Young's inequality $ab \leq a^2/2 +b^2/2$.
\end{proof}

In the observer design, the following properties of the friction are required. LuGre model, for example, satisfies these \cite{johanastrom2008revisiting} . 
\begin{property} 
	\label{ass:friction}
	The friction $\tau_f(v)$ can be bounded by 
	\begin{align}
	||\tau_f|| \leq a_1 ||v|| + a_2,
	\label{eq:friction_bound}
	\end{align}
	for some $a_i>0$.
\end{property}
\begin{property}
	\label{prop:fric_passivity}
	The friction $\tau_f(v)$ defines a passive input-output pair $(v,-\tau_f(v))$.
\end{property}

\subsection{Friction observer designs}
\label{sec:stability}

This section presents the stability analysis for a class of controllers that satisfy the following assumption.
\begin{ass}
	\label{ass:V_nq}
	Let us consider the ideal friction-free system
	\begin{align}
	\label{eq:ideal_friction_free}
	\dot{x}_{n,q} = f_{n,q}(x_{n,q}) + h_{n,q}(x_{n,q}) \tau_{ext}.
	\end{align}
	When $\tau_{ext}=0$, there exists $V_{n,q}(x_{n,q}) \geq 0$ such that
	\begin{align}
	\label{eq:V_nq_partial}
	\left|\left| \frac{\partial V_{n,q}}{\partial x_{n,q}} \right|\right| \leq& \beta_{nq1}\phi_{nq}(||x_{n,q}||) + \beta_{nq2}, \\
	\label{eq:V_nq_dot}
	\dot{V}_{n,q} \leq& -\alpha_{nq} ||\dot{\theta}_n||^2,
	\end{align}
	where $\phi_{nq}(\cdot)$ is a positive definite function, and $\alpha_{nq}$, $\beta_{nq1}$, $\beta_{nq2}>0$ are some constants. Moreover, if $\tau_c$ is designed to satisfy the passivity of $(\tau_{ext},\dot{q})$, then there exists a storage function $V_{n,q} \geq 0$ that satisfies (\ref{eq:V_nq_partial}) and
	\begin{align}
	\label{eq:V_nq_dot_passive}	
	\dot{V}_{n,q} \leq -\alpha_{nq} ||\dot{\theta}_n||^2 + \tau_{ext}^T\dot{q} .
	\end{align}
\end{ass}
Note that $V_{n,q}$ does not necessarily have to be a Lyapunov function since positive definiteness is not required. This assumption is not restrictive as it can be satisfied for most of practical robot controllers, because $\phi_{nq}(\cdot)$ can be an arbitrary positive definite function and  $-\alpha_{nq} ||\dot{\theta}_n||^2$ comes from the D-control action which always exists to satisfy stability.\footnote{
For example, consider a typical choice of Lyapunov function defined by quadratic term plus gravity potential. Then (\ref{eq:V_nq_partial}) is trivial.} The controller (\ref{eq:pd_controller_with_nominal}) satisfies this assumption using (27) in \cite{tomei1991simple} as $V_{n,q}$ of which time derivative is given by
\begin{align}
\label{eq:V_dot_nq_pd_controller}
\dot{V}_{n,q}=-\dot{\theta}_n^T K_d \dot{\theta}_n + \tau_{ext}^T \dot{q}.
\end{align}

\begin{thm}
	\label{thm:asymp_stable}
	In addition to Assumption \ref{ass:V_nq}, assume that  (i) $\dot{\theta}_d=0$, (ii) the friction value is constant in stiction, and (iii) the controller $\tau_c$ is designed to be asymptotically stable for friction-free system (\ref{eq:ideal_friction_free}), and satisfies $\alpha_{nq} - \beta_{nr}c_3>0$. If the friction observer is designed to satisfy that
	\begin{enumerate}
		\item the difference dynamics (\ref{eq:diff_ss}) are exponentially stable when $w=0$, so that there exists $V_{nr}(x_{nr})>0$  (by Lyapunov converse Theorem \cite{khalil2002nonlinear}) satisfying
		\begin{align}
		\left|\left| \frac{\partial V_{nr}}{\partial x_{nr} } \right|\right| \leq& \bar{\beta}_{nr} ||x_{nr}||, \\
		\dot{V}_{nr} \leq& - \bar{\alpha}_{nr} ||x_{nr}||^2 ,
		\end{align}
		for some $\bar{\alpha}_{nr}, \bar{\beta}_{nr}>0$.  Therefore, when $w \neq 0$,
		\begin{align}
		\nonumber \dot{V}_{nr} \leq&  - \bar{\alpha}_{nr} ||x_{nr}||^2  + \bar{\beta}_{nr}||B_{nr}|| \cdot ||x_{nr}|| \cdot ||w|| \\
		\leq& - \alpha_{nr} ||x_{nr}||^2  + \beta_{nr}\cdot ||w||^2,
		\label{eq:V_nr_dot}
		\end{align}
		for some $\alpha_{nr},\beta_{nr}>0$,\footnote{$||x_{nr}|| \cdot ||w|| \leq \frac{1}{2}||x_{nr}||^2+\frac{1}{2}||w||^2$ using Young's inequality.}
		\item $w$ can be bounded by (\ref{eq:w_bound_condition}), and therefore (\ref{eq:w_sq_bound_condition}) is satisfied,
		\item $\alpha_{nr}$ is positive and increases with a certain observer gain, whereas $\beta_{nr}>0$ does not, 
		\item there exists a unique equilibrium point of $x_{nq}$ and $x_{nr}$,
	\end{enumerate}
	then the overall closed-loop dynamics (\ref{eq:deriv_nom_motor})-(\ref{eq:deriv_diff_dyn}) are asymptotically stable when the observer gain associated with $\alpha_{nr}$ is chosen sufficiently large with $\tau_{ext}=0$ . 
	
	Moreover, when $\tau_{ext} \neq 0$, if $V_{n,q}$ in Assumption \ref{ass:V_nq} is a storage function that satisfies (\ref{eq:V_nq_dot_passive}),  the passivity of $(\tau_{ext},\dot{q})$ is preserved for $||\tau_{ext}|| \leq b_{ext}<\infty$ for some $b_{ext}>0$.
\end{thm}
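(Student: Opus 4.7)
The plan is to stabilize the interconnection (\ref{eq:nq_dyn_ss})--(\ref{eq:diff_ss}) by a composite Lyapunov function $V=V_{n,q}+V_{nr}$ and to show that, when the observer gain driving $\alpha_{nr}$ is pushed large enough, $\dot V$ becomes negative off the equilibrium, so that asymptotic stability follows from LaSalle's invariance principle combined with the uniqueness hypothesis (condition (iv)). The passivity claim will reuse the same $V$ as a storage function, carrying the $\tau_{ext}^T\dot q$ term through the same calculation.

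First I would compute $\dot V_{n,q}$ along the actual (perturbed) dynamics (\ref{eq:nq_dyn_ss}) with $\tau_{ext}=0$, splitting it into the ideal-system contribution (bounded by $-\alpha_{nq}\|\dot\theta_n\|^2$ through Assumption \ref{ass:V_nq}) plus the coupling term $(\partial V_{n,q}/\partial x_{n,q})^T g_{n,q}(x_{nr})$. Since $g_{n,q}$ is produced only by the perturbation $K_j e_{nr}$, it is Lipschitz in $x_{nr}$ with some constant $k$, so (\ref{eq:V_nq_partial}) bounds the coupling by $k(\beta_{nq1}\phi_{nq}(\|x_{n,q}\|)+\beta_{nq2})\|x_{nr}\|$, which I split with Young's inequality into a piece absorbed into the forthcoming $-\alpha_{nr}\|x_{nr}\|^2$ and a residual depending only on $x_{n,q}$. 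For the observer side, I use (\ref{eq:V_nr_dot}) together with Lemma \ref{lem:w_bound} to obtain
\begin{equation*}
\dot V_{nr}\le -(\alpha_{nr}-\beta_{nr}c_1)\|x_{nr}\|^2+\beta_{nr}c_2\|x_{nr}\|+\beta_{nr}c_3\|\dot\theta_n\|^2+\beta_{nr}c_4,
\end{equation*}
and again split the linear $\|x_{nr}\|$ contribution with Young's inequality.

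Summing, the $\|\dot\theta_n\|^2$ coefficient in $\dot V$ is $-(\alpha_{nq}-\beta_{nr}c_3)$, negative by the hypothesis $\alpha_{nq}-\beta_{nr}c_3>0$; the $\|x_{nr}\|^2$ coefficient is $-(\alpha_{nr}-\beta_{nr}c_1-\text{Young slack})$, which condition (3) makes strictly negative for sufficiently large observer gain because $\alpha_{nr}$ grows with this gain while $\beta_{nr}$ does not. On a forward-invariant compact set $\phi_{nq}(\|x_{n,q}\|)$ is bounded, so the residual coupling term and the $\beta_{nr}c_2,\beta_{nr}c_4$ constants can be dominated by further enlarging the gain. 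This yields $\dot V\le -\gamma_1\|\dot\theta_n\|^2-\gamma_2\|x_{nr}\|^2$ on that set; boundedness of trajectories then lets LaSalle's invariance principle restrict attention to the largest invariant subset of $\{\dot\theta_n=0,\;x_{nr}=0\}$, on which (\ref{eq:nom_motor_re}) collapses to $\tau_c(x_{n,q})=K_j(\theta_n-q)$. Combined with hypothesis (iii) (asymptotic stability of the friction-free nominal loop) and (iv) (uniqueness), this subset reduces to the single equilibrium, yielding asymptotic stability. For passivity I repeat the estimates but keep (\ref{eq:V_nq_dot_passive}) in place of (\ref{eq:V_nq_dot}), so the external-power term $\tau_{ext}^T\dot q$ survives while the remaining negative-definite pieces continue to dominate the coupling, giving $\dot V\le \tau_{ext}^T\dot q$ for $\|\tau_{ext}\|\le b_{ext}$, which is the required dissipation inequality for the pair $(\tau_{ext},\dot q)$.

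The main obstacle is precisely the residual produced by Young's inequality on the coupling term: because $\phi_{nq}$ is only assumed positive definite (possibly unbounded in $\|x_{n,q}\|$), a naive global argument leaves a $\phi_{nq}^2$-type term that cannot be absorbed by $-\alpha_{nq}\|\dot\theta_n\|^2$ in general; this forces a semi-global treatment on compact forward-invariant sets where $\phi_{nq}$ is bounded, which is consistent with the theorem's "sufficiently large observer gain" wording. A secondary subtlety is that because of the constant $\beta_{nr}c_4$ originating from the stiction value (hypothesis (ii) keeps this value constant so $w$ is well-defined), the equilibrium of $x_{nr}$ is generally nonzero—the integral channel of the PID-type observer absorbs the constant friction—so $V_{nr}$ must be taken around the shifted equilibrium identified in condition (iv) rather than at the origin of $x_{nr}$.
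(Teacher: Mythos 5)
Your composite-Lyapunov setup starts in the right direction, but it omits the ingredient the paper's proof actually hinges on and the step you use to replace it does not go through. The paper does not work with $V_{n,q}+V_{nr}$ alone: it invokes hypothesis (iii) through the converse Lyapunov theorem to obtain a second function $V^c_{n,q}$ with $\dot V^c_{n,q}\le-\alpha_c\phi_1(\|x_{n,q}\|)$ and $\|\partial V^c_{n,q}/\partial x_{n,q}\|\le\beta_c\phi_2(\|x_{n,q}\|)$, and uses $V=V_{n,q}+V^c_{n,q}+V_{nr}$, so that a negative term in the \emph{full} state $x_{n,q}$ is available and the coupling $\phi_3(\|x_{n,q}\|)\|x_{nr}\|$ can be treated as written in (\ref{eq:Vdot_first})--(\ref{eq:Vdot_second}). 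You instead rely on LaSalle to recover the missing decay of $x_{n,q}$, and your key claim --- that the residuals from Young's inequality together with $\beta_{nr}c_2\|x_{nr}\|$ and $\beta_{nr}c_4$ ``can be dominated by further enlarging the gain'' so that $\dot V\le-\gamma_1\|\dot\theta_n\|^2-\gamma_2\|x_{nr}\|^2$ on a compact set --- is false for any finite gain: $c_2,c_4$ stem from the Coulomb/stiction offset $a_2$ in Property \ref{ass:friction}, so these are strictly positive, state-independent offsets that shrink with $L$ but never vanish, and near the equilibrium (where $\dot\theta_n$ and $x_{nr}$ are small) the bound on $\dot V$ is positive. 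From such an estimate you can only conclude ultimate boundedness (practical stability), and LaSalle's invariance principle is not applicable because $\dot V\le0$ fails on every neighborhood of the equilibrium. This is exactly why the paper closes the argument differently: it first obtains boundedness of solutions from the Appendix (singular-perturbation/Tikhonov result), then argues $\dot\theta_n\to0$ by contradiction, reduces the difference dynamics to $B\ddot e_{nr}=\widehat\tau_f(x_{nr})-\tau_f(-\dot e_{nr})$, and finally invokes condition 4 (uniqueness of the equilibrium) together with (ii) to pin down the limit --- a qualitatively different final step that your proposal has no substitute for.

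Two further points. Your invariant set $\{\dot\theta_n=0,\;x_{nr}=0\}$ is inconsistent with the actual equilibrium, which has $x_{nr}\neq0$ (Corollaries \ref{cor:pid}--\ref{cor:pd}: $\int e_{nr}=L_i^{-1}L^{-1}B^{-1}\bar\tau_f$, respectively $e_{nr}=L_p^{-1}L^{-1}B^{-1}\bar\tau_f$); your suggested fix of centering $V_{nr}$ at the shifted equilibrium does not repair the estimates, because $w$ is constant only in stiction, so off the equilibrium the shifted difference dynamics are not the nominal exponentially stable system for which (\ref{eq:V_nr_dot}) was stated. Finally, the passivity part inherits the same unabsorbed positive offsets, and your sketch never uses the hypothesis $\|\tau_{ext}\|\le b_{ext}$; in the paper this bound is precisely what allows the term $\beta_c\phi_2(\|x_{n,q}\|)\|\tau_{ext}\|$ arising in $\dot V^c_{n,q}$ to be absorbed into the constant $d_3$ so that only $\tau_{ext}^T\dot q$ from $\dot V_{n,q}$ survives, giving $\dot V\le\tau_{ext}^T\dot q$.
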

\begin{proof}
	See Section \ref{sec:proof_as}.
\end{proof}
Interpretations for the seven requirements in this theorem are addressed in Section \ref{sec:discuss_conditions}. Theorem \ref{thm:asymp_stable} concludes asymptotic stability of $x_{n,q}$ and $x_{nr}$, which is not the original goal of the controller. Asymptotic stability for $\theta$ can be concluded using $\theta=-e_{nr}+\theta_n$. The following corollaries propose a couple of friction observers.
\begin{cor}[PID-type $C(s)$]
	\label{cor:pid}
Consider $C(s)$ defined by (\ref{eq:PID-type}). If the observer gain $L$ is chosen sufficiently large, then the closed-loop dynamics of Theorem \ref{thm:asymp_stable} are asymptotically stable to  the equilibrium point $x_{n,q}=0$, $\dot{e}_{nr}=0$, $e_{nr}=0$, and $\int e_{nr}= L_i^{-1}L^{-1}B^{-1}\bar{\tau}_f$, where $\bar{\tau}_f$ is the friction at steady-state. 
\end{cor}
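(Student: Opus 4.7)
The plan is to verify each of the four requirements of Theorem \ref{thm:asymp_stable} for the PID-type choice $C(s) = -BL(s + L_p + L_i/s)$, then invoke that theorem. Choosing the state ordering $x_{nr} = [(\int e_{nr})^T,\; e_{nr}^T,\; \dot{e}_{nr}^T]^T$ and substituting the observer output into the difference dynamics (\ref{eq:deriv_diff_dyn}) yields a closed-loop matrix $A_{cl}$ whose characteristic polynomial is $s^3 + Ls^2 + LL_p\,s + LL_i$, with the perturbation $w = -\tau_f$ entering through $B_{nr} = [0,\;0,\;B^{-1}]^T$.

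Requirements (2) and (4) then follow almost immediately. For (2), Property \ref{ass:friction} gives $||w|| = ||\tau_f|| \le a_1 ||\dot{\theta}|| + a_2$, which matches (\ref{eq:w_bound_condition}) with $b_1=b_3=0$, $b_2=a_1$, $b_4=a_2$, so Lemma \ref{lem:w_bound} supplies the needed quadratic bound. For (4), setting $\dot{x}_{nr}=0$ in the closed-loop difference system forces $e_{nr}=0$ and $\dot{e}_{nr}=0$ from the first two block-rows, while the third row pins $\int e_{nr}$ to the unique value $L_i^{-1} L^{-1} B^{-1} \bar{\tau}_f$ determined by the constant steady-state friction $\bar{\tau}_f$; with $e_{nr}=0$ the coupling term $K_j e_{nr}$ vanishes in (\ref{eq:nom_link_re})--(\ref{eq:nom_motor_re}), so the reduced system is the friction-free regulation problem under the PD controller (\ref{eq:pd_controller_with_nominal}), whose unique equilibrium is $x_{n,q}=0$ by hypothesis (iii) of Theorem \ref{thm:asymp_stable}.

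Requirements (1) and (3) are the heart of the argument: I would construct a quadratic Lyapunov function $V_{nr}(x_{nr}) = x_{nr}^T P x_{nr}$ with $P$ block-structured so that $A_{cl}^T P + P A_{cl} \preceq -\alpha_{nr}(L)\,I$ for some $\alpha_{nr}(L)$ that grows with $L$, while the gradient constant $\bar{\beta}_{nr}$ remains bounded in $L$. The strategy is to rescale the integrator coordinate by an appropriate power of $L$ and to couple the integrator, proportional, and derivative channels through off-diagonal entries of $P$ so that, after completing the square, the mode of order $L$ (the fast root near $-L$) contributes the dominant decay rate while the two slow modes governed by $L_p,L_i$ remain contracted. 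The strict margin $L_p^2 > 2L_i$ built into the observer definition is what appears to make this completion-of-squares estimate close, producing a single positive-definite $P$ valid uniformly for all large $L$. Once $V_{nr}$ is in hand, the cross-term inequality in (\ref{eq:V_nr_dot}) follows from Young's inequality, and Theorem \ref{thm:asymp_stable} delivers asymptotic stability to the equilibrium identified above.

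The main obstacle I anticipate is exactly this Lyapunov construction: verifying that the off-diagonal entries of $P$ can be chosen so that the $L$-dependent contraction rate survives without inflating $\bar{\beta}_{nr}$, and that the technical margin $L_p^2 > 2L_i$ is indeed the right amount of slack to make the resulting Riccati-type inequality feasible for all sufficiently large $L$.
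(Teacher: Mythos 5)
There is a genuine gap, and it sits exactly where you anticipated trouble: requirements 1) and 3) of Theorem \ref{thm:asymp_stable} cannot be met by the construction you propose. If you keep the entire PID action inside the closed-loop matrix $A_{cl}$ (characteristic polynomial $s^3+Ls^2+LL_p s+LL_i$) and let $w=-\tau_f$ only, then as $L$ grows $A_{cl}$ retains two slow eigenvalues approaching the roots of $s^2+L_p s+L_i=0$, whose real parts are bounded independently of $L$. But if a quadratic $V_{nr}=x_{nr}^TPx_{nr}$ satisfied $\dot V_{nr}\leq -\alpha_{nr}(L)\|x_{nr}\|^2$ with $\alpha_{nr}(L)\to\infty$ while $\|P\|$ (hence $\bar\beta_{nr}$) stays bounded, every trajectory of $\dot x_{nr}=A_{cl}x_{nr}$ would decay at rate at least $\alpha_{nr}(L)/(2\lambda_{\max}(P))\to\infty$, forcing all eigenvalues of $A_{cl}$ arbitrarily far into the left half plane --- contradicting the slow modes. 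The same obstruction survives the Young-inequality step: with $w=-\tau_f$ you can only trade $\bar\alpha_{nr}$ against $\bar\beta_{nr}$, and the resulting $\beta_{nr}$ necessarily grows with $L$ whenever $\alpha_{nr}$ does, violating requirement 3). So the "main obstacle" you flag is not a technical difficulty to be overcome by cleverer off-diagonal entries of $P$; in your formulation it is impossible.

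The paper's proof evades this precisely by using the freedom, built into (\ref{eq:diff_ss}) and Lemma \ref{lem:w_bound}, to let $w$ contain state-dependent terms. It writes the closed-loop difference dynamics as $B(\ddot e_{nr}+L_p\dot e_{nr}+L_i e_{nr})+R(\dot e_{nr}+L_p e_{nr}+L_i\int e_{nr})=w$ with $R=BL$ and $w=-\tau_f+BL_p\dot e_{nr}+BL_i e_{nr}$, so that $A_{nr}$ is $L$-independent and the observer output is exactly $\widehat\tau_f=-RB_{nr}^TPx_{nr}$ for the $P$ in (\ref{eq:P_define}), which solves the Riccati equation (\ref{eq:riccati}) with $Q=\mathrm{diag}\{L_i^2R,\,(L_p^2-2L_i)R,\,R\}$. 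This yields $\dot V_{nr}\leq -\lambda_m(Q)\|x_{nr}\|^2+R^{-1}\|w\|^2$, i.e., $\alpha_{nr}$ grows linearly in $L$ while $\beta_{nr}=R^{-1}$ shrinks; the extra terms $BL_p\dot e_{nr}+BL_ie_{nr}$ now sit in $w$ and are absorbed by Lemma \ref{lem:w_bound} with constants $c_1,c_2$ independent of $L$, so $d_1=\alpha_{nr}-\beta_{nr}c_1>0$ for large $L$. Note also that the margin $L_p^2>2L_i$ enters not as slack in an $L$-uniform completion of squares over your $A_{cl}$, but simply as positivity of the middle block of $Q$ (together with a Schur-complement argument for $P>0$). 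Your checks of requirements 2) and 4), and the identification of the equilibrium $e_{nr}=\dot e_{nr}=0$, $\int e_{nr}=L_i^{-1}L^{-1}B^{-1}\bar\tau_f$, $x_{n,q}=0$, are consistent with the paper; the missing idea is this reassignment of the slow observer terms into the perturbation $w$ and the associated Riccati/$\mathcal{H}_\infty$ structure.
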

\begin{proof}
	See Section \ref{sec:proof_cor_pid}	
\end{proof}

\begin{cor}[PD-type $C(s)$]
	\label{cor:pd}
Consider $C(s)$ defined by (\ref{eq:PD-type}). If the observer gain $L$ is chosen sufficiently large, then the closed-loop dynamics of Theorem \ref{thm:asymp_stable} are asymptotically stable to $\dot{e}_{nr}=0$, $e_{nr}=L_p^{-1}L^{-1}B^{-1}\bar{\tau}_f$, but the equilibrium point of $x_{n,q}$ may be perturbed due to nonzero $e_{nr}$.
\end{cor}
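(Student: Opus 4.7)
The plan is to apply Theorem~\ref{thm:asymp_stable} by verifying its four structural conditions for the PD-type compensator. With the state $x_{nr}=[e_{nr}^T,\;\dot{e}_{nr}^T]^T$, substituting (\ref{eq:PD-type}) into the difference dynamics (\ref{eq:deriv_diff_dyn}) yields
\[
\ddot{e}_{nr}+L\dot{e}_{nr}+LL_p e_{nr}=-\tfrac{1}{B}\tau_f,
\]
so that $A_{nr}=\bigl[\begin{smallmatrix}0 & I\\ -LL_pI & -LI\end{smallmatrix}\bigr]$, $B_{nr}=-\tfrac{1}{B}\bigl[\begin{smallmatrix}0\\ I\end{smallmatrix}\bigr]$, and the perturbation is $w=\tau_f$.

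For Condition~1, $A_{nr}$ is Hurwitz for every $L,L_p>0$ (characteristic polynomial $\lambda^2+L\lambda+LL_p$), so a quadratic Lyapunov function $V_{nr}=x_{nr}^T P x_{nr}$ exists with the gradient and decay bounds required by Theorem~\ref{thm:asymp_stable}. For Condition~2, Property~\ref{ass:friction} gives $\|w\|=\|\tau_f\|\le a_1\|\dot\theta\|+a_2$, matching the template (\ref{eq:w_bound_condition}) with $b_1=b_3=0$; Lemma~\ref{lem:w_bound} then delivers (\ref{eq:w_sq_bound_condition}). For Condition~4, assumption~(ii) of Theorem~\ref{thm:asymp_stable} makes the stiction friction a constant $\bar\tau_f$, and setting $\dot{e}_{nr}=\ddot{e}_{nr}=0$ in the difference dynamics gives the unique equilibrium $\dot{\bar{e}}_{nr}=0$, $\bar{e}_{nr}=L_p^{-1}L^{-1}B^{-1}\bar\tau_f$. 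Feeding this constant $K_j\bar{e}_{nr}\neq 0$ as a forcing term into the $x_{n,q}$-subsystem (\ref{eq:nom_link_re})--(\ref{eq:nom_motor_re}), the equilibrium of $x_{n,q}$ shifts away from the origin; its existence and uniqueness follow from the nonsingularity of the friction-free closed-loop Jacobian guaranteed by assumption~(iii), which is exactly why the corollary states only that $x_{n,q}$ \emph{may} be perturbed.

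The main obstacle is Condition~3: $\alpha_{nr}$ must grow with the observer gain while $\beta_{nr}$ stays bounded. A constant $P$ leaves both $\bar\alpha_{nr}$ and $\bar\beta_{nr}$ independent of $L$ and therefore fails. The remedy is an $L$-dependent $P$ that absorbs the explicit scaling of $A_{nr}$; a convenient candidate is $P=\bigl[\begin{smallmatrix}(LL_p+\epsilon)I & \epsilon I\\ \epsilon I & I\end{smallmatrix}\bigr]$ for a small fixed $\epsilon>0$, followed by an overall rescaling of $V_{nr}$ by $1/L$ to keep the gradient bound uniform in $L$. A direct Lyapunov computation then produces $\dot V_{nr}\le -cL\|x_{nr}\|^2+\kappa\|x_{nr}\|\cdot\|w\|$ with $c,\kappa>0$ independent of $L$, and Young's inequality (as in the footnote to (\ref{eq:V_nr_dot})) yields $\alpha_{nr}=\mathcal{O}(L)$ and $\beta_{nr}=\mathcal{O}(1)$. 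With all four conditions verified, Theorem~\ref{thm:asymp_stable} delivers asymptotic stability to the claimed equilibrium once $L$ is taken sufficiently large.
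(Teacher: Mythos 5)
Your proposal is correct in substance and reaches the corollary the same way the paper does at the top level --- by verifying the conditions of Theorem~\ref{thm:asymp_stable} --- but through a genuinely different Lyapunov construction. The paper's proof (Section~\ref{sec:proof_cor_pd}) keeps the compensator explicit and reuses the PID machinery of Section~\ref{sec:proof_cor_pid}: with $A_{nr}=\bigl[\begin{smallmatrix}0 & I\\ 0 & -L_p\end{smallmatrix}\bigr]$, $B_{nr}=\bigl[\begin{smallmatrix}0\\ B^{-1}\end{smallmatrix}\bigr]$ and $R=BL$, it exhibits the displayed $P$ and $Q=\mathrm{diag}\{L_p^2R,\;R\}$ satisfying the Riccati identity (\ref{eq:riccati}), so Lemma~\ref{lem:nr} immediately gives $\dot V_{nr}\le -\lambda_m(Q)\|x_{nr}\|^2+R^{-1}\|w\|^2$, i.e.\ $\alpha_{nr}=\lambda_m(Q)\propto L$ and $\beta_{nr}=(BL)^{-1}$; the price is that $w$ there absorbs the state-dependent term $BL_p\dot e_{nr}$ (harmless, since its coefficient is $L$-independent, exactly the freedom Lemma~\ref{lem:w_bound} was set up for). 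You instead fold the compensator into the closed-loop matrix, keep $w=\tau_f$ pure, and construct an $L$-dependent quadratic $V_{nr}$ by hand; the essential and correct observation is that $PB_{nr}$ remains $\mathcal{O}(1)$ even though $P$ grows with $L$, so the cross term has an $L$-independent coefficient while $-(A_{nr}^TP+PA_{nr})$ has minimum eigenvalue of order $L$ (your candidate $P$ does deliver this for small $\epsilon$ and large $L$), and Young's inequality then yields $\alpha_{nr}=\mathcal{O}(L)$, $\beta_{nr}=\mathcal{O}(1/L)$ --- exactly what conditions 1)--3) require. The paper's route buys explicit constants and the $\mathcal{H}_\infty$ interpretation for free once $P,Q$ are written down; your route avoids redefining $w$ but must verify the eigenvalue scaling directly, and is the more elementary of the two.

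Two small wrinkles, neither fatal. The ``rescale $V_{nr}$ by $1/L$'' step is unnecessary and slightly counterproductive: after rescaling, $\alpha_{nr}$ becomes $\mathcal{O}(1)$ rather than increasing, so the literal wording of condition 3) is no longer met (the substance --- $\alpha_{nr}-\beta_{nr}c_1>0$ and $\alpha_{nq}-\beta_{nr}c_3>0$ for large $L$ --- still holds because $\beta_{nr}\to 0$); the unrescaled $V_{nr}$ with the direct bound on $2x_{nr}^TPB_{nr}w$ already suffices. Also, your displayed closed loop $\ddot e_{nr}+L\dot e_{nr}+LL_pe_{nr}=-B^{-1}\tau_f$ gives at steady state $e_{nr}=-L_p^{-1}L^{-1}B^{-1}\bar\tau_f$, opposite in sign to what you then assert; the corollary (and the paper's own equilibrium lemma for the PID case) carries the same sign looseness, traceable to the convention for how $C(s)$ acts on $e_{nr}$, so this is a bookkeeping issue rather than a gap. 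Finally, asymptotic stability in assumption (iii) does not by itself guarantee a nonsingular Jacobian; uniqueness of the perturbed $x_{n,q}$ equilibrium is precisely what condition 4) of Theorem~\ref{thm:asymp_stable} postulates, and it is cleaner to invoke it than to derive it.
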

\begin{proof}
	See Section \ref{sec:proof_cor_pd}
\end{proof}

It might be informative to mention that, for the proposed PID-/PD-type observers, $\alpha_{nr}$ increases and $\beta_{nr}$ decreases as the observer gain $L$ increases.

\subsection{Behavior of the friction observers in the stiction region}
\label{sec:behavior_in_stiction}

Theorem \ref{thm:asymp_stable} states asymptotic stability while neglecting dynamic behavior of the stiction. Therefore, it only implies that the robot trajectory converges to the stiction region around the desired point.  However, as illustrated in the motivating example in Section \ref{sec:proposed_fo},  dynamic behavior of the stiction may result in an oscillatory motion due to the energy generation during the stiction compensation (see Fig. \ref{fig:stiction_comp_stiction}a). To perform a further analysis, let us begin with the following assumption which is valid when the closed-loop dynamics have converged to the stiction region according to Theorem \ref{thm:asymp_stable}; hence $\tau_j=K_j(\theta-q)$ is constant because $\dot{\theta}=\dot{q} =0$.
\begin{ass}
	\label{ass:fric_passive}
	 Assume that, as a consequence of Theorem \ref{thm:asymp_stable}, the JTS measurement $\tau_j$ is constant. Then, the nominal motor-side dynamics (\ref{eq:deriv_nom_motor}) are not excited, and $\dot{\theta}_n=0$ can be further assumed. As a result, the friction can be written as $\tau_f(\dot{\theta}) = \tau_f(\dot{\theta}_n-\dot{e}_{nr}) = \tau_f(-\dot{e}_{nr})$.
\end{ass}

%\begin{figure}
%	\centering
%	{\includegraphics[scale=0.35]{figures/pdf/stiction_comp_passivity-crop.pdf}}
%	\caption{Passivity-based stiction compensation (Theorem \ref{thm:passivity}).}
%	\label{fig:stiction_comp_stiction} 
%\end{figure}
\begin{figure}
	\centering
	\begin{subfigure}[]
		{\includegraphics[height=2.61cm]{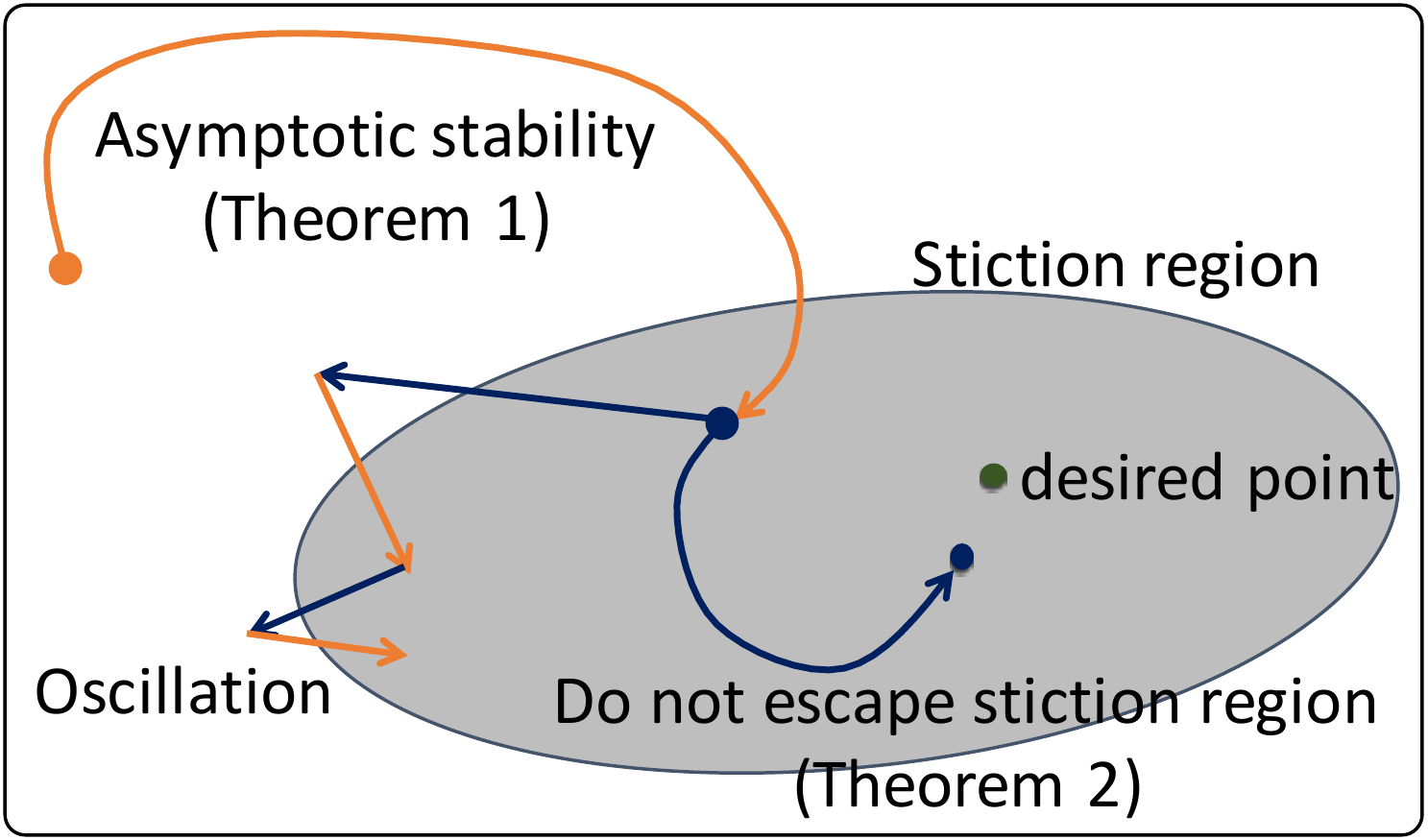}}
	\end{subfigure}
	\centering
	\begin{subfigure}[]
		{\includegraphics[height=2.74cm]{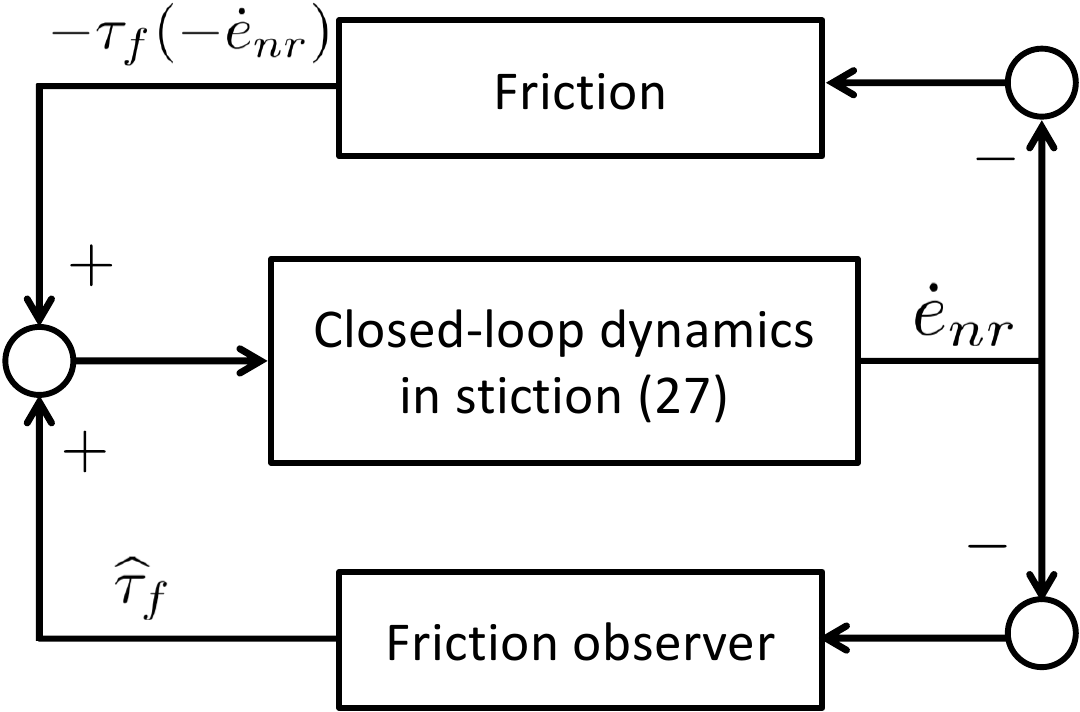}}
	\end{subfigure}
	\caption{(a) Conceptual roles of the main results. Although asymptotic stability (for regulation) is guaranteed by Theorem 1 while neglecting the dynamics of stiction, an oscillatory motion may occur due to the energy generation during stiction compensation. (b) Passivity-based analysis to prevent the energy generation. The resulting equilibrium point, however, may differ from the desired one.}
	\label{fig:stiction_comp_stiction} 
\end{figure}

It is now sufficient to investigate if the energy is generated in the closed-loop difference dynamics because the other sub-dynamics are not excited.
\begin{align}
\label{eq:deriv_diff_dyn_stiction}
B\ddot{e}_{nr} = \widehat{\tau}_f(x_{nr}) - \tau_f(-\dot{e}_{nr}).
\end{align}
Noting that the input-output pair $(-\dot{e}_{nr},-\tau_f(-\dot{e}_{nr}))$ of the friction dynamics is passive according to the Property \ref{prop:fric_passivity}, the following theorem is straightforward.
\begin{thm}
	\label{thm:passivity}
	Under Assumption \ref{ass:fric_passive}, if $\widehat{\tau}_f$ is designed to satisfy passivity of the input-output pair $(-\dot{e}_{nr}, \widehat{\tau}_f)$, then (\ref{eq:deriv_diff_dyn_stiction}) can be represented as feedback interconnections of passive subsystems.
\end{thm}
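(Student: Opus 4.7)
The plan is to exhibit equation (\ref{eq:deriv_diff_dyn_stiction}) as a standard negative-feedback interconnection of three passive blocks and then invoke the classical result that such an interconnection is itself passive, so that no energy can be generated internally during stiction compensation.

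First, I would isolate the motor-side inertia as a forward block $H_1:\,u\mapsto \dot{e}_{nr}$ defined by $B\ddot{e}_{nr}=u$. With kinetic storage function $S_1 = \tfrac{1}{2}\dot{e}_{nr}^T B\, \dot{e}_{nr}$ one immediately obtains $\dot{S}_1 = u^T \dot{e}_{nr}$, so the input-output pair $(u,\dot{e}_{nr})$ is passive. Second, I would identify the physical friction as a feedback block $H_2:\,(-\dot{e}_{nr})\mapsto -\tau_f(-\dot{e}_{nr})$, which is passive by Property \ref{prop:fric_passivity} applied at $v=-\dot{e}_{nr}$. Third, the friction observer is identified with a feedback block $H_3:\,(-\dot{e}_{nr})\mapsto \widehat{\tau}_f$, which is passive by the hypothesis of the theorem, with some storage functions $S_2,S_3\ge0$ furnished by the respective passivity properties.

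To close the loop, I would carefully track signs. The output of $H_1$, namely $\dot{e}_{nr}$, is negated and used as the common input $-\dot{e}_{nr}$ to both $H_2$ and $H_3$. Summing the outputs $-\tau_f(-\dot{e}_{nr})$ and $\widehat{\tau}_f$ of these parallel blocks yields $\widehat{\tau}_f - \tau_f(-\dot{e}_{nr})$, which matches exactly the right-hand side of (\ref{eq:deriv_diff_dyn_stiction}) and hence is precisely the input $u$ driving $H_1$. This realizes (\ref{eq:deriv_diff_dyn_stiction}) as the canonical negative-feedback interconnection of the passive forward block $H_1$ with the parallel aggregate $H_2\oplus H_3$. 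Using the total storage function $V=S_1+S_2+S_3$, a direct addition of the three dissipation inequalities gives $\dot{V}\le 0$ in the unforced setting of Assumption \ref{ass:fric_passive}, which rules out energy generation in the stiction region.

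The argument is essentially bookkeeping, and the only place that requires any care is the sign convention between Property \ref{prop:fric_passivity}, which asserts passivity of $(v,-\tau_f(v))$, and the form in which the friction enters (\ref{eq:deriv_diff_dyn_stiction}). Once one evaluates Property \ref{prop:fric_passivity} at $v=-\dot{e}_{nr}$, the output of the friction block appears with the correct sign to be combined with $\widehat{\tau}_f$ through simple summation at the input of $H_1$, and the feedback-interconnection conclusion follows without further computation.
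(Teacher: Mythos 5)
Your proposal is correct and follows essentially the same route as the paper: the paper likewise decomposes (\ref{eq:deriv_diff_dyn_stiction}) into the passive integrator block $1/(Bs)$ (your $H_1$ with storage $\tfrac{1}{2}\dot{e}_{nr}^T B \dot{e}_{nr}$), the friction block passive by Property \ref{prop:fric_passivity}, and the observer block passive by hypothesis, concluding a feedback interconnection of passive subsystems. Your explicit sign bookkeeping and the summed storage inequality $\dot{V}\le 0$ simply spell out what the paper leaves to Fig. \ref{fig:stiction_comp_stiction}b.
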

\begin{proof}
	Passivity of ``Closed-loop dynamics in stiction'' block is trivial because its dynamics are $1/Bs$, and that of ``Friction'' block is satisfied from the Property \ref{prop:fric_passivity}. If the ``friction observer'' block is also passive, the entire structure is represented by feedback interconnections of passive subsystems; see Fig. \ref{fig:stiction_comp_stiction}b.
\end{proof}
This theorem indicates that the energy is not generated during the stiction compensation if the friction observer is designed to be passive. Therefore, the PD-type observer (\ref{eq:PD-type}) guarantees that the closed-loop dynamics do not escape the stiction region, whereas the PID-type (\ref{eq:PID-type}) does not.

\section{Theoretical Derivation}
\label{sec:theory}

\subsection{Proof of Theorem \ref{thm:asymp_stable}}
\label{sec:proof_as}

Because the $\tau_c$ is a stabilizing controller for the nominal system (\ref{eq:ideal_friction_free}), the converse Lyapunov Theorem \cite{khalil2002nonlinear} guarantees the existence of $V_{n,q}^c>0$ (where the superscript $c$ stands for converse) such that
\begin{align}
\label{eq:V_qn_dot}
\dot{V}^c_{n,q} \leq -\alpha_{c} \phi_{1}(||x_{n,q}||) \;\; \text{and} \;\;
\left|\left|\frac{\partial V^c_{n,q}}{\partial x_{n,q}}\right|\right| \leq \beta_{c} \phi_{2}(||x_{n,q}||)
\end{align}
for some $\alpha_c>0$, $\beta_c>0$, and for some positive definite functions $\phi_{1}(\cdot)$ and $\phi_{2}(\cdot)$. Therefore, for the original system (\ref{eq:nq_dyn_ss}),  $\dot{V}_{n,q}+\dot{V}^c_{n,q}$ can be upper bounded as follows since $||g_{n,q}|| \leq \beta_g ||x_{nr}||$ for some $\beta_g>0$.
\begin{align}
	\dot{V}&_{n,q}+\dot{V}^c_{n,q} \leq - \alpha_{nq} ||\dot{\theta}_n||^2 -\alpha_{c} \phi_1(||x_{n,q}||) + \phi_3(||x_{n,q}||) ||x_{nr}||,
\label{eq:V_c_nq_dot}
\end{align}
where $\phi_3(\cdot) = \beta_g \left( \beta_c\phi_2(\cdot) + \beta_{nq1}\phi_{nq}(\cdot) + \beta_{nq2} \phi_{nq2} \right)$.

Let us define a Lyapunov-like function $V(x_{n,q}, x_{nr})=V_{n,q}(x_{n,q})+V^c_{n,q}(x_{n,q})+V_{nr}(x_{nr})>0$. Using (\ref{eq:V_nr_dot}) and(\ref{eq:V_c_nq_dot}),
\begin{align}
\nonumber \dot{V} \leq &  - \alpha_{nq} ||\dot{\theta}_n||^2 -\alpha_{c} \phi_1(||x_{n,q}||) + \phi_3(||x_{n,q}||) ||x_{nr}||  \\
& - \alpha_{nr} ||x_{nr}||^2 + \beta_{nr} ||w||^2.
\end{align}
 Using Lemma \ref{lem:w_bound},
\begin{align}
\label{eq:Vdot_first}
 \dot{V} \leq & -d_1 ||x_{nr}||^2  + d_2 ||x_{nr}|| + d_3 \\
\label{eq:Vdot_second}
&- \alpha_c \phi_1(||x_{n,q}||) - (\alpha_{nq} - \beta_{nr}c_3)||\dot{\theta}_n||^2 ,
\end{align}
where $d_1 = \alpha_{nr} -\beta_{nr}c_1$, $d_2 = \beta_{nr}c_2+ \phi_3(||x_{n,q}||)$, and $d_3 = \beta_{nr}c_4$. 
%\begin{align}
%\label{eq:d_1}
%d_1 =& \alpha_{nr} -\beta_{nr}c_1, \\
%\label{eq:d_2}
%d_2 =& \beta_{nr}c_2+ \phi_3(||x_{n,q}||), \\
%d_3 =& \beta_{nr}c_4 .
%\end{align}
%\begin{align}
%\label{eq:d_1}
%d_1 = \alpha_{nr} -\beta_{nr}c_1, \;\;
%d_2 = \beta_{nr}c_2+ \phi_3(||x_{n,q}||), \;\; \text{and} \;\;
%d_3 = \beta_{nr}c_4 .
%\end{align}
To save space, we borrow the result of Appendix from which the boundedness of the solution can be guaranteed. Based on this, convergence of $||x_{nr}||$, $||x_{n,q}||$, and $||\dot{\theta}_n||$ to certain values can be claimed because $d_2$ is bounded. We conclude the proof by showing that the states can only converge to the unique equilibrium point. First, $\dot{\theta}_n$ should converge to zero because otherwise $||x_{n,q}||$ cannot converge to a constant (hence contradiction). With $\dot{\theta}_n=0$, the difference dynamics become $B\ddot{e}_{nr} = \widehat{\tau}_f(x_{nr}) - \tau_f(-\dot{e}_{nr})$ because $\dot{\theta} = -\dot{e}_{nr} + \dot{\theta}_n=-\dot{e}_{nr} $. Hence $x_{nr}$ converges to the unique equilibrium, and this indicates the convergence of $x_{n,q}$ to its equilibrium as well.

Passivity  can be shown by simple extension. Nonzero $\tau_{ext}$ appears in time derivatives of $V_{n,q}$ and $V_{n,q}^c$. In $\dot{V}_{n,q}^c$, using (\ref{eq:V_qn_dot}), $\tau_{ext}$ appears as $\dot{V}_{n,q}^c \leq \beta_{c} \phi_{2}(||x_{n,q}||) ||\tau_{ext}||\leq b_{ext}\beta_{c} \phi_{2}(||x_{n,q}||)$ which eventually can be absorbed into  $d_3$ and plays no role by the same reasoning with the previous. Finally,  $\tau_{ext}^T\dot{q}$ included in $\dot{V}_{n,q}$ will result in  $\dot{V} \leq \tau_{ext}^T\dot{q}$, which indicates that the passivity is preserved.

\subsection{Proof of Corollary \ref{cor:pid}}
\label{sec:proof_cor_pid}

Let us express the closed-loop difference dynamics as
\begin{align}
\nonumber B (\ddot{e}_{nr} + L_p \dot{e}_{nr} + L_i e_{nr}) + \underbrace{R (\dot{e}_{nr} + L_p e_{nr} + L_i \int e_{nr})}_{=-\widehat{\tau}_f} \\
= \underbrace{-\tau_f + BL_p \dot{e}_{nr} + BL_i e_{nr}}_{=w},
\end{align}
where $R=BL$ is used for simplicity. Therefore, it is obvious that the first requirement of Theorem \ref{thm:asymp_stable} is true. The closed-loop difference dynamics can be expressed in the state-space with
\begin{align}
A_{nr}
=
\left[
\begin{array}{ccc}
0 & I & 0 \\
0 & 0 & I \\
0 &  -L_i & - L_p
\end{array}
\right], \;
B_{nr}
=
\left[
\begin{array}{c}
0 \\
0 \\
B^{-1}
\end{array}
\right].
\end{align}
Define $P$ and $Q$ as
\begin{align}
\label{eq:P_define}
P=
\left[
\begin{array}{ccc}
BL_i^2 + L_i L_p R & BL_p L_i + L_i R& BL_i \\
BL_p L_i  + L_i R   & BL_p^2 + L_p R & BL_p \\
BL_i   & BL_p & B
\end{array}
\right]
\end{align}
and $Q= diag\{ L_i^2 R, \;  (L_p^2-2L_i)R, \; R\}$. Then, $A_{nr}$, $B_{nr}$, $P$, and $Q$ satisfy the following:\footnote{
	Noting that (\ref{eq:riccati}) is the  Riccati equation associated with the $\mathcal{H}_\infty$ optimal control problem, the friction compensation input attenuates $w$ in the sense of $\mathcal{H}_\infty$ optimality \cite{kim2015bringing}:
	\begin{align}
\nonumber	\int x_{nr}^T Q x_{nr}^T + \widehat{\tau}_f^T R^{-1} \widehat{\tau}_f \leq \int w^T w,
	\end{align}
}
\begin{align}
A_{nr}^TP + P A_{nr} - P B_{nr} R B_{nr}^T P + Q = 0.
\label{eq:riccati}
\end{align}
Using this relation, we claim the following Lemma.

\begin{lem}
	\label{lem:nr}
	Using $V_{nr}=x_{nr}^T P x_{nr}>0$, we have
	\begin{align}
	\label{eq:bdd_V_nr_dot}
	\dot{V}_{nr} \leq -\lambda_{m}(Q)||x_{nr}||^2 + R^{-1} ||w||^2,
	\end{align}
	where $\lambda_{m}(Q)$ represents the minimum eigenvalue of $Q$.
\end{lem}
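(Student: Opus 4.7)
The plan is to compute $\dot V_{nr}$ along the closed-loop difference dynamics $\dot x_{nr} = A_{nr}x_{nr} + B_{nr}\widehat\tau_f + B_{nr}w$ and invoke the Riccati identity (\ref{eq:riccati}) to kill the $A_{nr}^TP+PA_{nr}$ term. The first step is to recognize that, with $P$ given by (\ref{eq:P_define}) and $B_{nr}=[0,0,B^{-1}]^T$, the third block-row of $P$ yields $B_{nr}^T P = [L_i,\; L_p,\; I]$, so that the PID feedback law can be rewritten as $\widehat\tau_f = -R\,B_{nr}^T P\, x_{nr}$. This is exactly the form of the $\mathcal{H}_\infty$ state-feedback gain associated with (\ref{eq:riccati}), which is what makes the algebra collapse cleanly.

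Next, I would differentiate $V_{nr}=x_{nr}^TPx_{nr}$ and substitute the closed-loop expression for $\widehat\tau_f$ to obtain
\begin{align}
\dot V_{nr} = x_{nr}^T\bigl(A_{nr}^TP + PA_{nr} - 2PB_{nr}RB_{nr}^TP\bigr)x_{nr} + 2x_{nr}^TPB_{nr}w.
\end{align}
Using (\ref{eq:riccati}) to replace $A_{nr}^TP+PA_{nr}$ with $PB_{nr}RB_{nr}^TP - Q$, the bracketed quadratic form reduces to $-Q - PB_{nr}RB_{nr}^TP$.

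The cross term $2x_{nr}^TPB_{nr}w$ is then handled by completing the square (i.e.\ Young's inequality in its matrix version): for positive definite $R$,
\begin{align}
2\,x_{nr}^T P B_{nr}\,w \;\leq\; x_{nr}^T P B_{nr} R B_{nr}^T P\,x_{nr} + w^T R^{-1} w.
\end{align}
Adding this to the previous bound, the $PB_{nr}RB_{nr}^TP$ contributions cancel exactly, leaving $\dot V_{nr}\le -x_{nr}^TQx_{nr}+w^TR^{-1}w$. Lower-bounding the first term by $\lambda_m(Q)\|x_{nr}\|^2$ and interpreting $R=BL$ (so that $R^{-1}$ commutes with the identity) to write $w^TR^{-1}w \le R^{-1}\|w\|^2$ in the scalar-coefficient sense used throughout the paper gives (\ref{eq:bdd_V_nr_dot}).

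The only real obstacle I foresee is the bookkeeping identification $\widehat\tau_f = -RB_{nr}^TPx_{nr}$: it requires verifying, entry by entry, that the third row of the chosen $P$ really reproduces the PID compensator, which is what glues the lemma to the Riccati equation. Once that is in place, the proof is a one-line Young's-inequality argument, and positive definiteness of $P$ (needed for $V_{nr}>0$) can be checked separately from the structure of (\ref{eq:P_define}) under the standing gain condition $L_p^2>2L_i$.
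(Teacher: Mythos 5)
Your proposal is correct and follows essentially the same route as the paper: the paper likewise identifies $\widehat{\tau}_f=-RB_{nr}^TPx_{nr}$, differentiates $V_{nr}=x_{nr}^TPx_{nr}$, invokes the Riccati identity (\ref{eq:riccati}), and absorbs the cross term $2x_{nr}^TPB_{nr}w$ by completing the square (writing $\dot V_{nr}=-x_{nr}^TQx_{nr}-z^TRz+R^{-1}\|w\|^2$ with $z=B_{nr}^TPx_{nr}-R^{-1}w$), which is exactly your Young's-inequality step. The only difference is that the paper also explicitly verifies $V_{nr}>0$ by rewriting it as $(\dot e_{nr}+L_pe_{nr}+L_i\int e_{nr})^TB(\cdot)$ plus a quadratic form in $(\int e_{nr},e_{nr})$ whose matrix is shown positive definite via a Schur complement, a check you correctly flag but defer.
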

\begin{proof}
	$V_{nr}>0$ can be shown as follows:
	\begin{align}
	\nonumber V_{nr} =& (\dot{e}_{nr} + L_p e_{nr} + L_i\int e_{nr})^T B (\dot{e}_{nr} + L_p e_{nr} + L_i \int e_{nr}) \\
	+& 
	\left[
	\begin{array}{c}
	\int e_{nr} \\
	e_{nr} \\
	\end{array}
	\right]^T
	\underbrace{
		\left[
		\begin{array}{cc}
		L_i L_p R & L_i R \\
		L_i R & L_p R  \\
		\end{array}
		\right]
	}_{\triangleq P_1}
	\left[
	\begin{array}{c}
	\int e_{nr} \\
	e_{nr} \\
	\end{array}
	\right],		
	\end{align}
	where $P_1>0$ can be shown using Schur complement. Here, let us assume $\lambda_{m}(Q)=R$ for simplicity (the other cases can be analyzed similarly). Then,  
	\begin{align}
	\nonumber \dot{V}_{nr} =& 2x_{nr}^T P (A_{nr}x_{nr} + B_{nr} \widehat{\tau}_f + B_{nr}w) \\
	\nonumber =& 2x_{nr}^T P (A_{nr}x_{nr} - B_{nr} R B_{nr}^TPx_{nr} + B_{nr}w) \\
			 =& -x_{nr}^T Q x_{nr} - z^T R z + R^{-1} ||w||^2,
	\end{align}
	where $z=P B_{nr}  x_{nr} - R^{-1}w$. Hence (\ref{eq:bdd_V_nr_dot}) follows.
\end{proof}

Therefore, the second requirement of Theorem \ref{thm:asymp_stable} is true. Finally, the following Lemma tells that there is only one possible equilibrium point that the system state can converge to. Therefore, the third requirement in Theorem \ref{thm:asymp_stable} is also true.

\begin{lem}
	The equilibrium point of the overall dynamics (\ref{eq:deriv_nom_motor})-(\ref{eq:deriv_diff_dyn})  is uniquely given by $e_{nr}=\dot{e}_{nr}=0$, $\int e_{nr} = L_i^{-1} L^{-1}B^{-1} \bar{\tau}_f$ and $x_{n,q}=0$.
\end{lem}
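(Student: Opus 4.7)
The plan is to exploit the defining property of an equilibrium, namely that every time derivative vanishes, together with the algebraic structure of the state vector $x_{nr}=[\int e_{nr}^T,\,e_{nr}^T,\,\dot{e}_{nr}^T]^T$ and of the friction-free nominal FJR subsystem that $\tau_c$ was originally designed for.

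First, since the first component of $x_{nr}$ is $\int e_{nr}$ and $\tfrac{d}{dt}\int e_{nr}=e_{nr}$, the equilibrium condition $\dot{x}_{nr}=0$ forces $e_{nr}=0$ directly. The remaining components give $\dot{e}_{nr}=\ddot{e}_{nr}=0$. Substituting these three identities into the closed-loop difference dynamics (\ref{eq:deriv_diff_dyn}), with $\widehat{\tau}_f=-R(\dot{e}_{nr}+L_p e_{nr}+L_i\int e_{nr})$ from the PID-type $C(s)$ in (\ref{eq:PID-type}), collapses the relation $B\ddot{e}_{nr}=\widehat{\tau}_f-\tau_f$ to $-R L_i\int e_{nr}=\bar{\tau}_f$, which fixes $\int e_{nr}=L_i^{-1}L^{-1}B^{-1}\bar{\tau}_f$ (up to sign convention on $\bar{\tau}_f$). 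This dispatches the $x_{nr}$ part of the equilibrium.

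Second, with $e_{nr}=0$ the perturbation term $K_j e_{nr}$ disappears from both (\ref{eq:nom_link_re}) and (\ref{eq:nom_motor_re}), so that the nominal subsystem (\ref{eq:nq_dyn_ss}) reduces exactly to the ideal friction-free dynamics (\ref{eq:ideal_friction_free}) with $\tau_{ext}=0$. By hypothesis $\tau_c$ renders this ideal system asymptotically stable about $x_{n,q}=0$, so the origin must be the unique equilibrium of (\ref{eq:ideal_friction_free}); any other equilibrium would contradict asymptotic stability. Hence $x_{n,q}=0$ is forced, and the equilibrium of the overall coupled system is pinned down uniquely.

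The only delicate step is the last one: converting ``$\tau_c$ is asymptotically stable'' into ``the nominal closed loop has a unique equilibrium.'' For the concrete motor-side PD controller (\ref{eq:pd_controller_with_nominal}) this amounts to verifying that $q=q_d$ is the only solution of $(I+K_pK_j^{-1})(g(q)-g(q_d))=-K_p(q-q_d)$, which follows under the standard Tomei/Spong gain condition that underlies the asymptotic-stability assumption in the first place. I would therefore not redo this algebra but simply cite the standing hypothesis on $\tau_c$, so that the argument remains controller-agnostic and fits the generality under which Theorem \ref{thm:asymp_stable} and Corollary \ref{cor:pid} are stated.
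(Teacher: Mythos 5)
Your proof is correct and follows essentially the same route as the paper: first read off $e_{nr}=\dot e_{nr}=0$ and the value of $\int e_{nr}$ from the equilibrium of the difference dynamics with the PID-type observer, then use $e_{nr}=0$ to reduce (\ref{eq:nq_dyn_ss}) to the ideal friction-free system (\ref{eq:ideal_friction_free}), whose equilibrium $x_{n,q}=0$ is taken from the nominal controller design. The only difference is that you make explicit (and correctly flag as the delicate point) the step from asymptotic stability of the nominal closed loop to uniqueness of its equilibrium, which the paper simply asserts.
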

\begin{proof}
	To begin with, note that $x_{n,q}=0$ is the equilibrium point of the ideal friction-free model (\ref{eq:ideal_friction_free}). Now, let us consider the steady state of the overall dynamics (\ref{eq:nq_dyn_ss})-(\ref{eq:diff_ss}). For (\ref{eq:diff_ss}), it is clear that the equilibrium point is $e_{nr}=\dot{e}_{nr}=0$ and $\int e_{nr} = L_i^{-1} L^{-1}B^{-1} \bar{\tau}_f$. Using $e_{nr}=0$, the equilibrium of (\ref{eq:nq_dyn_ss}) is that of (\ref{eq:ideal_friction_free}). Hence $x_{n,q}=0$ is the unique equilibrium point.
\end{proof}

\subsection{Proof of Corollary \ref{cor:pd}}
\label{sec:proof_cor_pd}

The following $P$ and $Q$ can be used.
\begin{align}
P=
\left[
\begin{array}{cc}
 BL_p^2 + B L_p R & B L_p \\
 B L_p & B
\end{array}
\right], \; \text{and} \; 
Q=
\left[
\begin{array}{ccc}
L_p^2 R & 0 \\
0 &  R
\end{array}
\right].
\end{align}
The same procedure as in Section \ref{sec:proof_cor_pid} can be followed.

\section{Discussions}
\label{sec:discussion}

\subsection{Working principle and generalization}
\label{sec:phyiscal_inter}

Combining (\ref{eq:PID-type}) with (\ref{eq:deriv_diff_dyn}) results in
\begin{align}
\label{eq:fric_obs_lpf_PID}
\widehat{\tau}_f = \frac{L s^2 + L L_p s + L L_i}{s^3 + L s^2 + L L_p s + L L_i}\tau_f,
\end{align}
which is the 3-2 order LPF.\footnote{$a$-$b$ order LPF means that it has $a$-th order denominator and $b$-th order numerator.}$^,$\footnote{Observer gains are treated as scalar in transfer functions. This simplification is possible when they are constant diagonal matrices.} Namely, the resulting $\widehat{\tau}_f$ is the low-pass filtered value of $\tau_f$. Similarly, the PD-type $C(s)$ in (\ref{eq:PD-type})  results in the 2-1 order LPF (use $L_i=0$ in (\ref{eq:fric_obs_lpf_PID})).

In addition to the LPF property, the proposed friction observer can be alternatively interpreted as follows: $C(s)$ is a control action that attenuates the influence of friction $\tau_f$ on $e_{nr}$ (or $x_{nr}$) in the difference dynamics (\ref{eq:deriv_diff_dyn}). The associated LPF can be derived from $C(s)$, and vice versa:
\begin{align}
\label{eq:relation_C_LPF}
LPF(s) = \frac{P(s)C(s)}{P(s)C(s)-1}, \;\; C(s) = \frac{LPF(s)}{P(s)\left\{LPF(s) -1 \right\} }.
\end{align}

\subsection{On the conditions of Theorem \ref{thm:asymp_stable}}
\label{sec:discuss_conditions}

This section presents interpretations for the seven conditions in Theorem \ref{thm:asymp_stable}. The conditions (i)-(iii) are assumptions  on the controller and friction, and conditions 1)-4) are requirements for the friction observer design.

{\it(i) $\dot{\theta}_d=0$:} As discussed in Section \ref{sec:phyiscal_inter}, the observer compensates for the friction by the relation of LPF which always has a phase lag. Namely, a perfect compensation cannot be achieved when the system is in motion, and therefore, we cannot say anything about convergence. This is a fundamental limitation of model-free approaches that cannot provide model-based friction feed-forward. To argue something about convergence, the system should be commanded to be stationary.

Practical stability can be shown for the tracking case, as presented in Appendix. We would like to underline that the tracking case does not require any conditions needed in Theorem \ref{thm:asymp_stable}. As far as the controller is stable (in any sense) and guarantees existence of a unique solution, practical stability can be concluded. It roughly means that the resulting trajectory approaches the unique solution as the observer gain increases.

{\it(ii) The friction value is constant in stiction:} Theorem \ref{thm:asymp_stable} states asymptotic stability while neglecting the dynamic behavior of the stiction friction. Namely, Theorem \ref{thm:asymp_stable} can only claim that the closed-loop dynamics are attracted to the stiction region, but there is no guarantee if the system stays there when the dynamics of stiction start to play a significant role. To answer this question, a passivity-based analysis on the stiction compensation is presented in Section \ref{sec:behavior_in_stiction}.

{\it(iii) The controller $\tau_c$ is nominally designed to be asymptotically stable, and satisfies $\alpha_{nq}-\beta_{nr}c_3>0$:} The link-side and nominal motor-side dynamics (\ref{eq:nom_link_re})-(\ref{eq:nom_motor_re}) are perturbed by $e_{nr}$. Therefore, the controller has to be strong enough to endure some amount of perturbation until the friction observer suppresses $e_{nr}$ sufficiently (recall also Section \ref{sec:phyiscal_inter}). A sufficient condition for this is $\alpha_{nq}-\beta_{nr}c_3>0$. When (\ref{eq:pd_controller_with_nominal}) is used with (\ref{eq:PID-type}), we have $\alpha_{nq}=K_d$ (recall (\ref{eq:V_dot_nq_pd_controller})), $\beta_{nr}=(BL)^{-1}$. $c_3$ is the coefficient for the viscous friction. In our experimental setup, $c_3 \simeq 10$, $B\simeq 1$. Hence the resulting sufficient condition was, roughly, $K_d \geq 10/L$. This was not restrictive in our experiments, because $L$ could be selected greater than 100.

{\it1) the difference dynamics (\ref{eq:diff_ss}) are exponentially stable when $w=0$, so that we have $\dot{V}_{nr} \leq -\alpha_{nr}||x_{nr}||^2 + \beta_{nr}||w||^2$:} Since the difference dynamics (\ref{eq:diff_ss}) are the 2nd order linear system, exponential stability is a reasonable requirement when $w=0$. In the proof, $-\alpha_{nr}||x_{nr}||^2$ is used to dominate $||w||^2$.

{\it2) $w$ can be bounded by (\ref{eq:w_bound_condition}), and therefore (\ref{eq:w_sq_bound_condition}) is satisfied:} The perturbation should be bounded not greater than linearly. Otherwise, the observer may not dominate the perturbation. We also remark that Assumption \ref{ass:V_nq} is introduced to dominate the perturbation $c_3 ||\dot{\theta}_n||^2$ in (\ref{eq:w_sq_bound_condition}), which is the only term that cannot be dominated by $-\alpha_{nr}||x_{nr}||^2$.

{\it3) $\alpha_{nr}$ is positive and increases with a certain observer gain, whereas $\beta_{nr}$ does not:} This requirement lies in the same line with the conditions 1) and 2). If $\alpha_{nr}$ increases with the observer gain while $\beta_{nr}$ does not, then the perturbation can be dominated by increasing the observer gain. Recall that for the proposed observers (\ref{eq:PID-type}) and (\ref{eq:PD-type}), $\alpha_{nr}=BL$ and $\beta_{nr}=(BL)^{-1}$.

{\it4) there exists a unique equilibrium point of $x_{nq}$ and $x_{nr}$:} Equilibrium point must be defined to claim stability.

\begin{figure*}
	\centering
	\begin{subfigure}[PID-type observer with low gains]
		{\includegraphics[width=5.85cm]{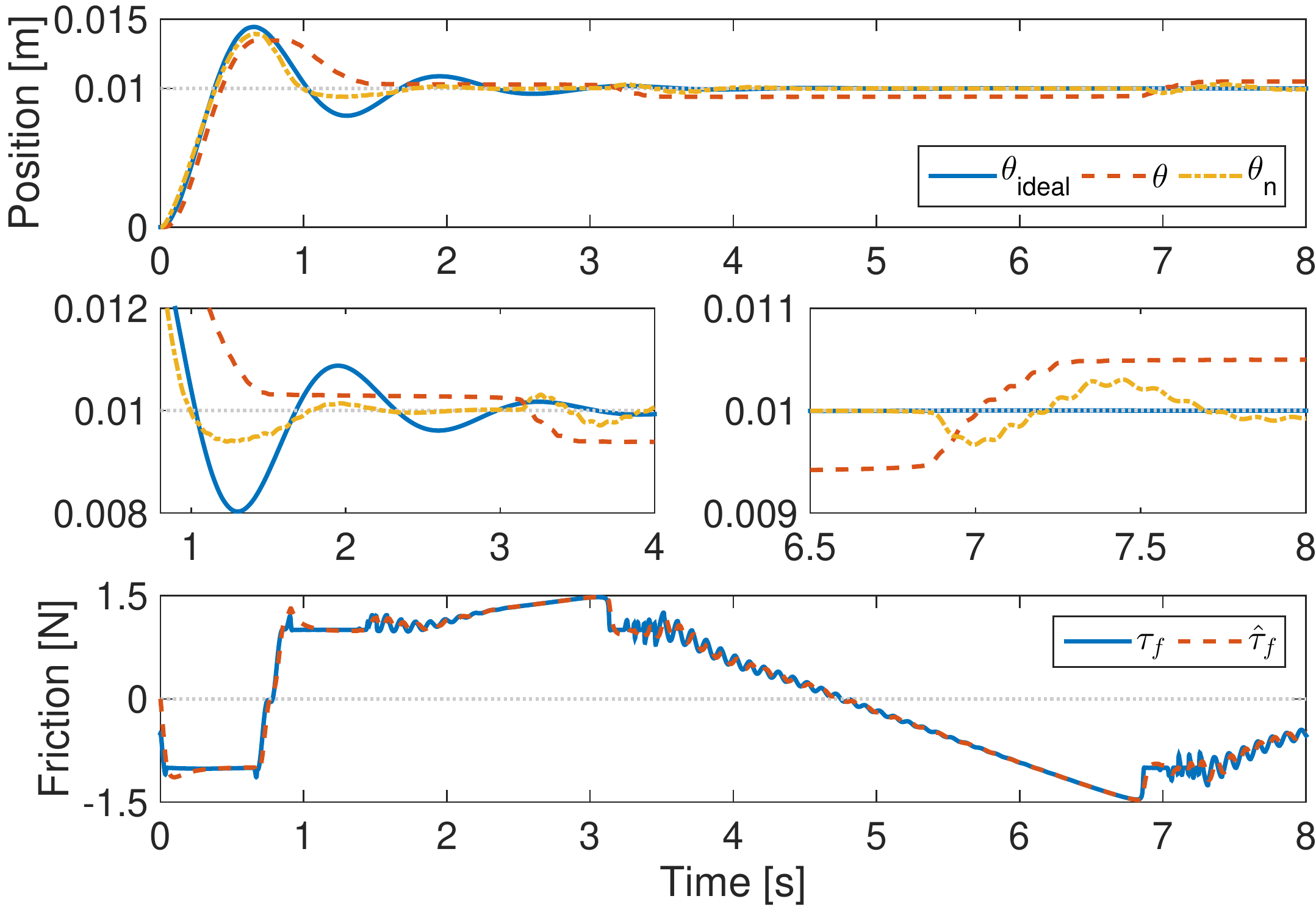}}
	\end{subfigure}
	\begin{subfigure}[PD-type observer with low gains]
		{\includegraphics[width=5.85cm]{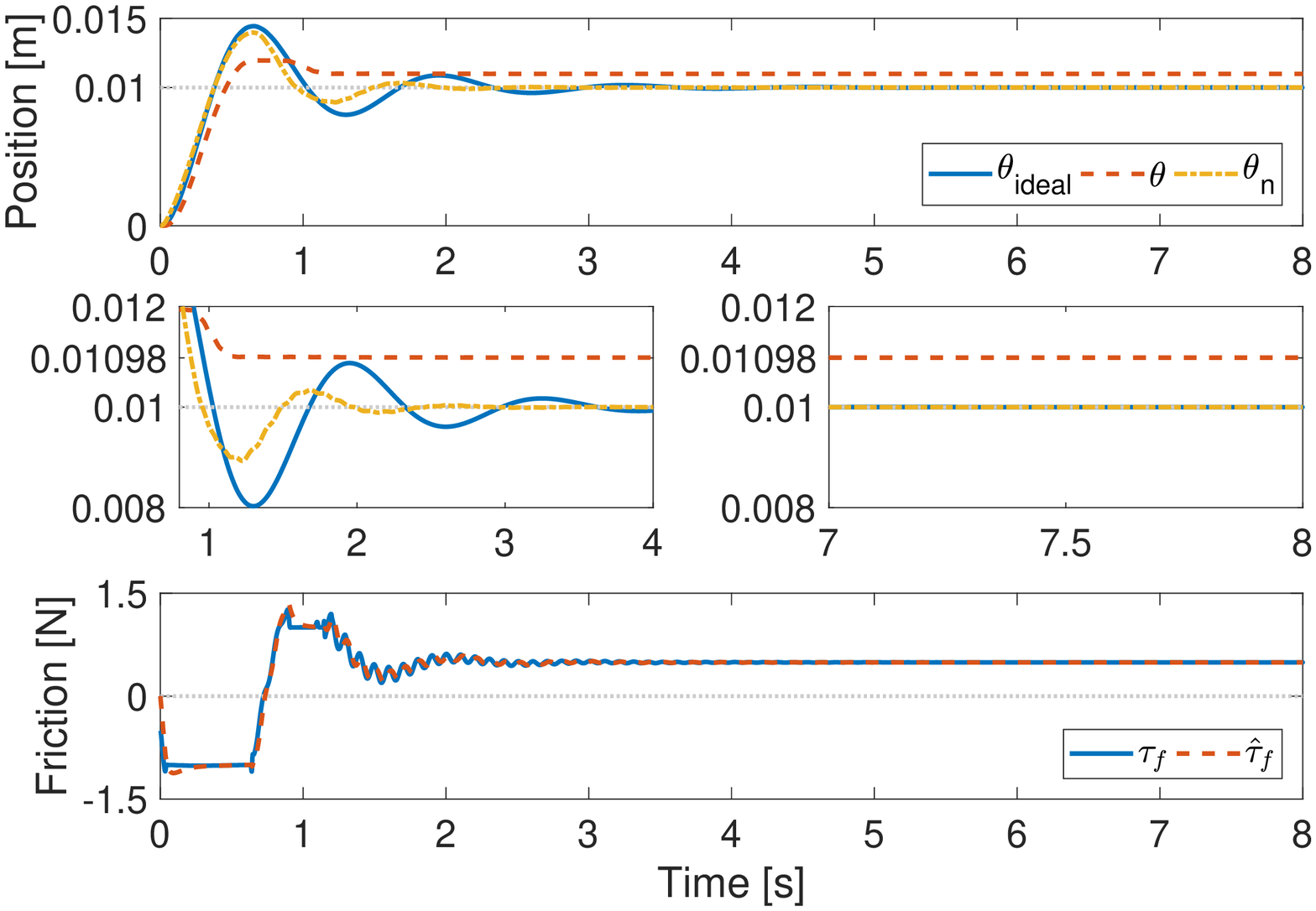}}
	\end{subfigure} 
	\begin{subfigure}[Observer from \cite{le2008friction} with low gains]
		{\includegraphics[width=5.85cm]{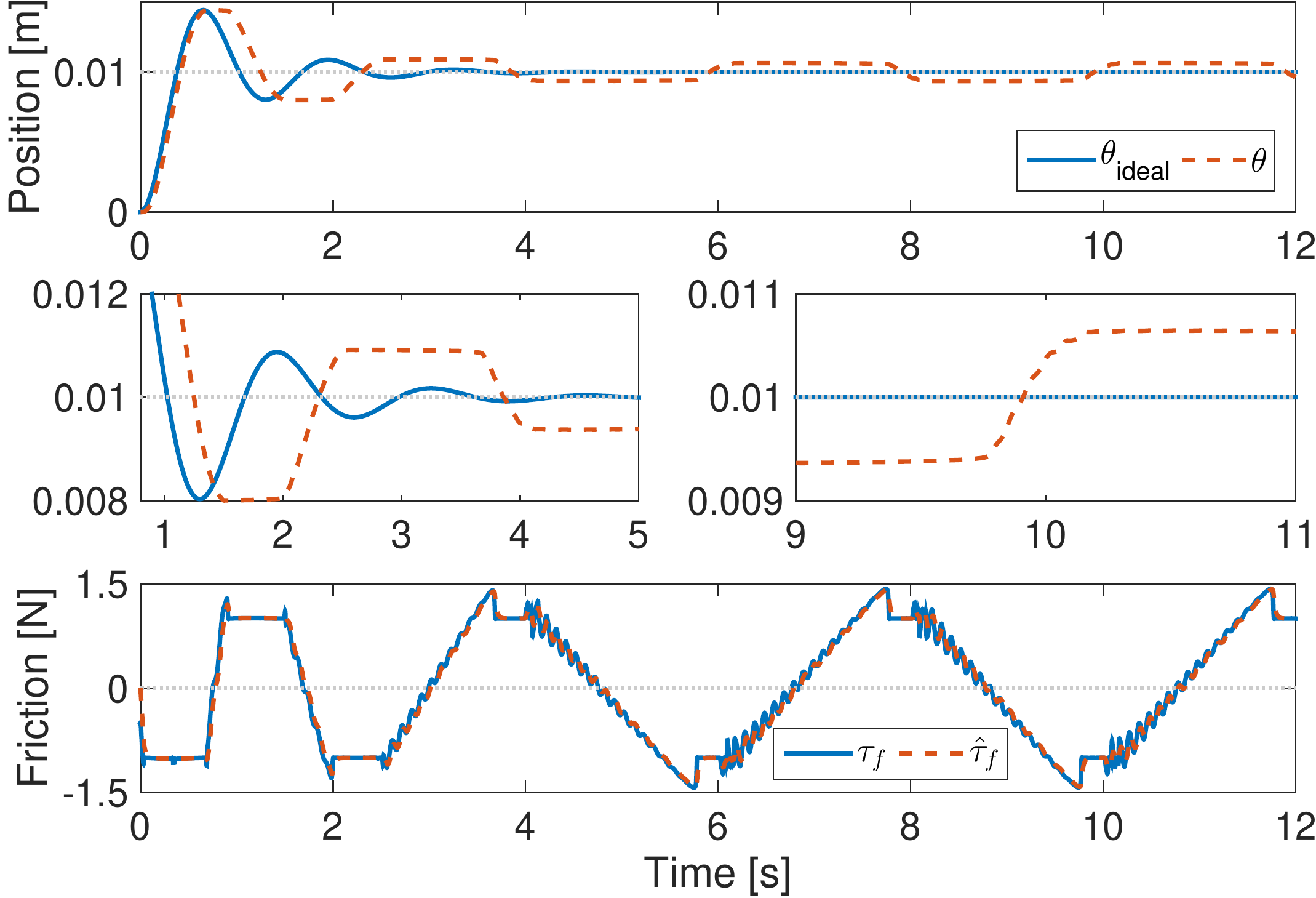}}
	\end{subfigure}\\
	\centering
	\begin{subfigure}[PID-type observer with high gains]
		{\includegraphics[width=5.85cm]{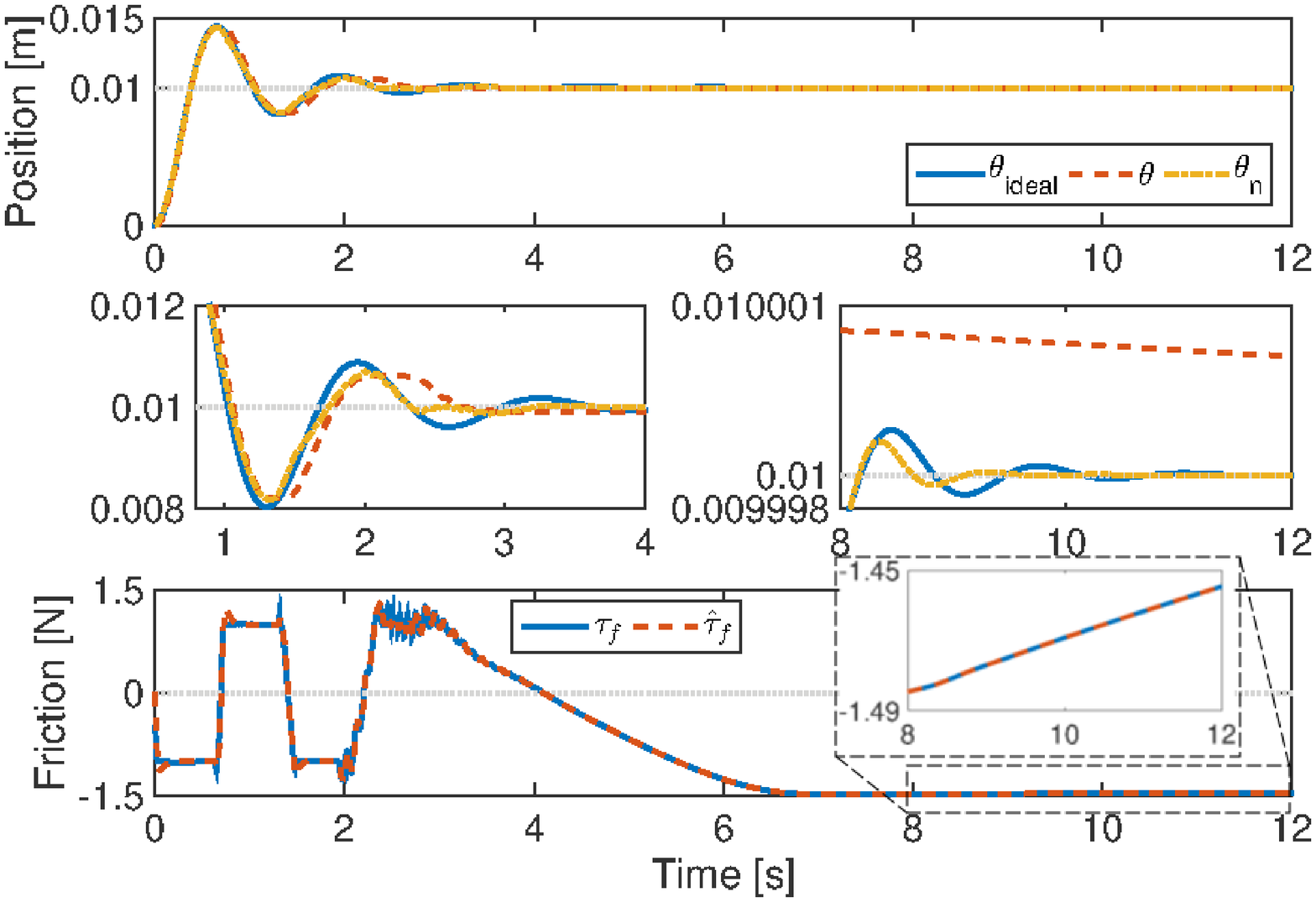}}
	\end{subfigure}	
	\begin{subfigure}[PD-type observer with high gains]
		{\includegraphics[width=5.85cm]{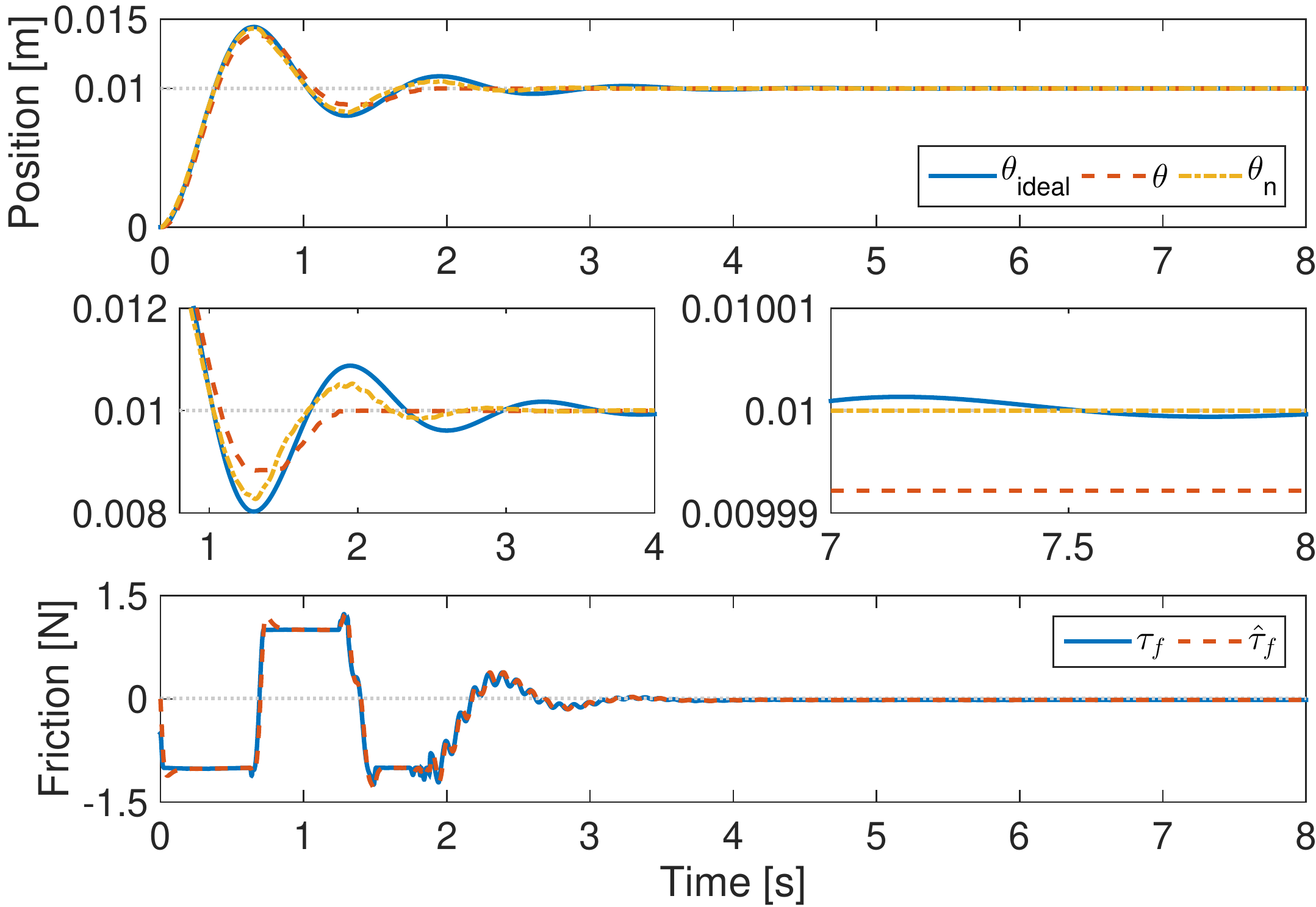}}
	\end{subfigure}
	\begin{subfigure}[Observer from \cite{le2008friction} with high gains]
		{\includegraphics[width=5.85cm]{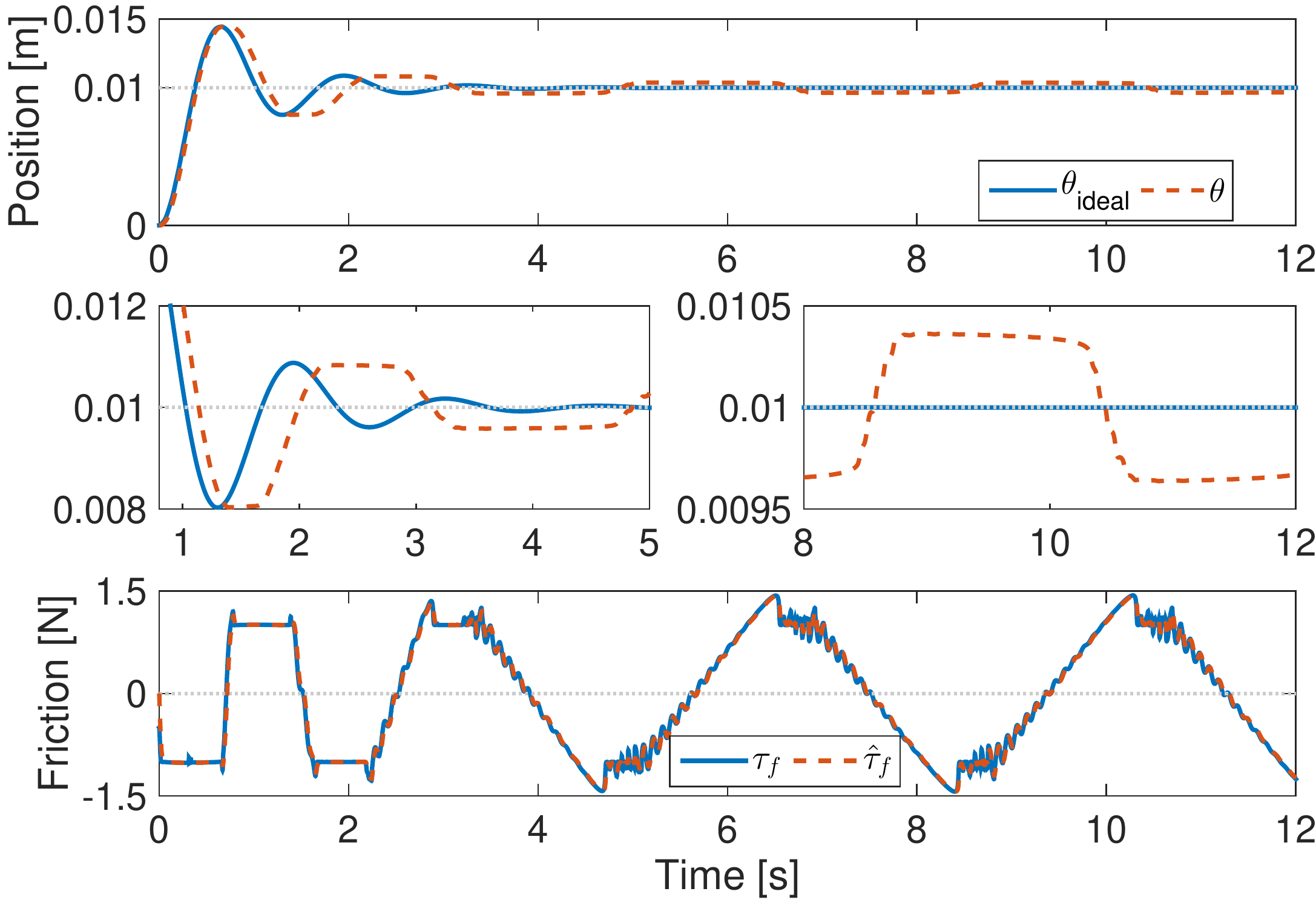}}
	\end{subfigure}
	\caption{Simulation results. For each plot, the top row shows the resulting motor-side positions with the magnified views in the second row. $\theta$ and $\theta_n$ are shown together with the ideal behavior $\theta_{\text{ideal}}$ which is simulated using friction-free ideal FJR. The third row shows $\tau_f$ and $\widehat{\tau}_f$.}
	\label{fig:simul} 
\end{figure*}

\subsection{Separation-like property}

In linear system theories, an observer is said to have separation property when the control gains and observer gains do not affect each other (hence can be designed separately). For the proposed friction observer, a similar property can be observed. In (\ref{eq:Vdot_second}), the decaying rate of $x_{n,q}$ is determined by  $\alpha_c$ which comes from the control design. Likewise, in (\ref{eq:Vdot_first}), the decaying rate of $||x_{nr}||$ is determined by $d_1$ in which the observer gain plays a dominant role when it is sufficiently large. Therefore the controller and friction observer can be designed separately.

\section{Validation}
\label{sec:validation}

\subsection{Simulations}

Simulations were performed to validate the proposed friction observer with the LuGre friction model:
\begin{align}
\label{eq:lugre_1}
\tau_f(v) = \sigma_0 z + \sigma_1 \dot{z} + \sigma_2 v \;\; {\text{with}} \;\;
 \dot{z} = v - \sigma_0  \frac{|v|}{g(v)}z, 
\end{align}
where $g(v) = f_c + (f_s - f_c)e^{-(v/v_s)^2}$. The parameters of the friction model are taken from \cite{de1995new}: $\sigma_0 = 10^5$, $\sigma_1 = \sqrt{10^5}$, $\sigma_2=0.4$, $f_c=1$, $F_s=1.5$, and $v_s=0.001$. The motivating example in Section \ref{sec:proposed_fo} was simulated with system parameters $B=1\mathrm{kg}$, $M=1\mathrm{kg}$, $K_j=3000\mathrm{N/m}$, $g(q)=0\mathrm{N}$. As illustrated in the example, PD gains were $K_p=50\mathrm{N/m}$ and $K_d=5\mathrm{N\cdot s/m}$. The desired position $\theta_d=0.01\mathrm{m}$ was commanded as step with zero initial conditions. Because the purpose of the simulation study is to validate the theoretical findings, we did not include any additional effects such as quantization, sensor noise, and link-side friction. The performance in the presence of these effects will be evaluated by the experiments.

Two sets of observer gains (low and high) were compared for both PID- and PD-type observers. The observer proposed in \cite{le2008friction} (shown in Fig. \ref{fig:friction_observers}b) was also implemented for comparison. $L=50$, $L_p=10$, $L_i=25$ were used for low gains, and $L=100$, $L_p=20$, $L_i=100$ were used for high gains. $L_i=0$ was set for the PD-type observer, and $L_p=L_i=0$ for the observer in \cite{le2008friction}. The simulation results are summarized in Fig. \ref{fig:simul}.

\begin{figure}
	\centering
	{\includegraphics[scale=0.39]{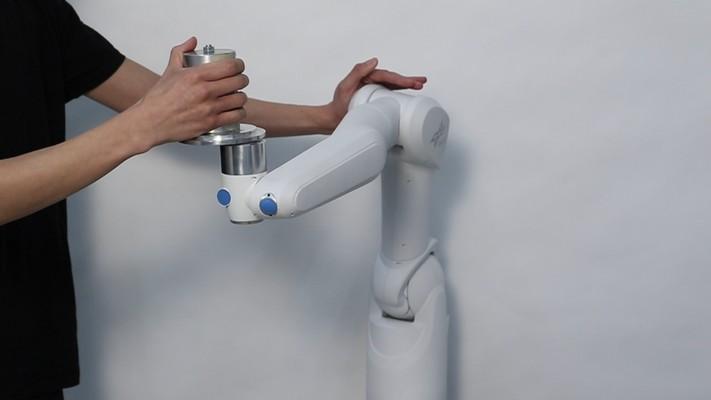}}
	\caption{7DOF surgical robot MIRO used in the experiments}
	\label{fig:miro} 
\end{figure}

When the low gains were used, as shown in Fig. \ref{fig:simul}a, the PID-type observer resulted in an oscillation around the desired equilibrium point. In the third row of the figure, $\widehat{\tau}_f$ varied together with $\tau_f$ due to the integrator since $e_{nr}$ was not zero. As illustrated in the motivating example, $\theta$ started to move when $\widehat{\tau}_f$ was large enough to break the stiction (see also the second row). In contrast, as shown in Fig. \ref{fig:simul}b, the oscillation did not occur for the PD-type observer which, however, resulted in constant steady-state error. Due to the absence of integral action, $\widehat{\tau}_f$ could converge to a certain value inside the stiction region, as shown in the third row. The observer in \cite{le2008friction} also showed oscillatory motion around the desired point (Fig. \ref{fig:simul}c)

When the PID-type was used with high gains, as shown in Fig. \ref{fig:simul}d, the oscillatory motion was hardly observed because the integral action did not play a significant role since $e_{nr}$ could be strongly suppressed. We report that, $e_{nr} \rightarrow 0$ (namely, constant $\widehat{\tau}_f$ and $\theta \rightarrow \theta_d$) could be achieved. For the PD-type, as shown in Fig. \ref{fig:simul}e, although the steady-state error was reduced due to the increased gains, it did not converge to zero, as expected from Corollary \ref{cor:pd}. For the observer in \cite{le2008friction}, the oscillatory motion remained although the amplitude was decreased.

From the simulation results, we can conclude that the PD- and the PID-type observers have different characteristics. The PD-type observer guarantees convergence to a certain point regardless of the observer gains, however, at the price of some steady-state error. PID-type observer may result in oscillatory motion, but there is also the possibility that it may outperform the PD-type observers.

\subsection{Experiments}

Two experiments were performed using a 7-link surgical robot MIRO shown in Fig. \ref{fig:miro}. The first experiment is designed to repeat the scenario of simulation studies to validate the theoretic findings. The second experiment shows the tracking scenario using task-space controller. Apart from these, the video attachment shows interaction of the robot with human operator as illustrated in Fig. \ref{fig:miro}. 

In the first experiment, the motor-side PD controller was used similar to the simulations with the following scenario: (i) initially, the robot was commanded to keep its initial position; (ii) at $t=3\mathrm{s}$, every joint was commanded to move $-5\mathrm{deg}$; (iii) at $t \approx 13\mathrm{s}$, a human operator applied external force. In addition to the proposed PID- and PD-type observer, the friction observer from \cite{le2008friction} was implemented for comparison. The case without friction observer is shown to provide reference. Similar to the simulation study, two sets of observer gains were used; low and high. Fig. \ref{fig:exp_joint} shows the error plot for the 2nd joint only, since this joint showed the different characteristics most clearly. For this joint, the control gains were $K_p=50\mathrm{Nm/rad}$ and $K_d=2\mathrm{Nm\cdot s/rad}$ with $B \approx 1 \mathrm{kg\cdot m^2}$; therefore, the joint should show under-damped motion if the friction is correctly compensated for. Similar to the simulation, with the PID-type observer, the oscillations occurred when the observer gains were low. When the high observer gains were used, the oscillations were barely observable. On the other hand, the PD-type observers did not show any oscillations for both low and high gains. Another interesting thing to observe is that the steady-state error was different after the human interaction because the joint can be stuck anywhere in the reduced stiction region. For the low gain case, for example, the steady-state error before the interaction was $0.003635\mathrm{rad}$ and after the interaction was $0.001881\mathrm{rad}$. When the friction observer from \cite{le2008friction} was used, asymptotically stable behavior was not achieved for both low and high gains.

In the second experiment, the following task-space impedance controller was implemented:
\begin{align}
\tau_c = J^{T}(\theta_n) \left( - K_p e_x - K_d \dot{e}_x\right)  + g(\bar{q}(\theta_n)),
\end{align}
where $J$ is the Jacobian matrix and $e_x=x(\theta_n) - x_d$. $x$ is a task-space variable which contains position/orientation of the end-effector as well as that of the third joint (in $\mathrm{rad}$). With the gravity compensation $g(\bar{q}(\theta_n))$ computed using quasi-statically estimated link-side position $\bar{q}$, this controller is known to satisfy asymptotic stability and passivity for regulation  \cite{ott2008passivity}. The control gains were selected as $K_p=1000\mathrm{N/m}$ , $K_d=200\mathrm{N \cdot s/m}$ for end-effector positions, $K_p=10\mathrm{Nm/rad}$, $K_d=2 \mathrm{Nm \cdot s/rad}$ for end-effector orientations and elbow. The end-effector was commanded to track a piecewise smooth (up to acceleration) square trajectory while keeping the end-effector orientation and elbow position fixed, in 5 seconds. The length of each line segment was $15\mathrm{cm}$. After $t=5\mathrm{s}$, the desired end-effector position was set to be constant. Compared to the case without friction observer, the friction observers contributed to improve the tracking performance, as shown in Fig. \ref{fig:exp_task}. %Angle of rotation  is used to represent orientation error of the end-effector. %It is  interesting to see the difference between PID- and PD- type observers. As shown in the magnified view, the former closed the square loop although some oscillation occurred, but the latter did not closed the loop because the end-effector was stuck in the stiction.

The video attachment of this paper shows the second experiment with human interaction. In the video, the control gains $K_p$, $K_d$ of the end-effector position were lowered in order to show the effect of friction observers and to show interaction capability more clearly.

\begin{figure}
	\centering
	{\includegraphics[scale=0.315]{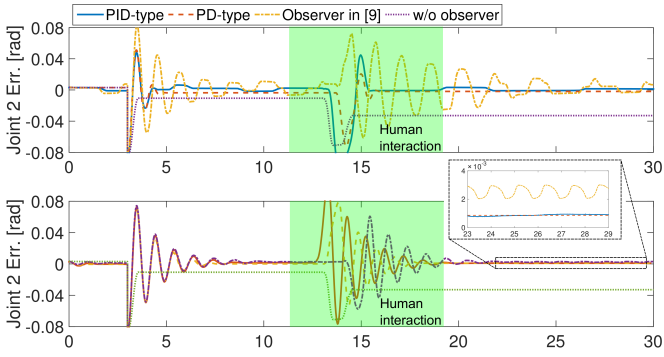}}
\caption{Experimental results for the joint space PD regulation control. Step command ($5^\circ$) was commanded to all joints at $t=3\mathrm{s}$, and a human operator applied external force at $t \simeq 13\mathrm{s}$. Top row: low observer gains, bottom row: high observer gains.}
	\label{fig:exp_joint} 
\end{figure}

\begin{figure}
	\centering
	{\includegraphics[scale=0.215]{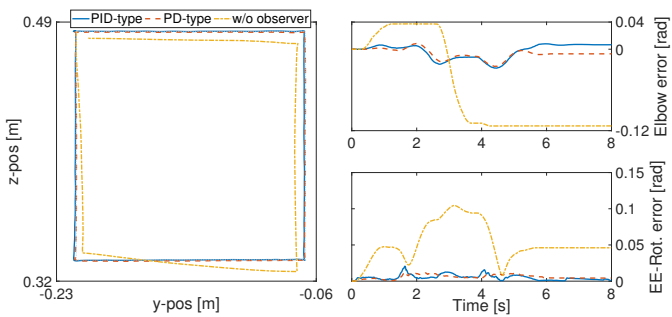}}
	\caption{Experimental results for the task space tracking control. }
	\label{fig:exp_task} 
\end{figure}

\section{Summary and Conclusion}
\label{sec:conclusion}

This paper proposed model-free friction observers that do not harm stability/passivity of the original controller. The main feature of the proposed approach is the use of nominal motor signals in the control law (Fig. \ref{fig:friction_observers}a). A similar friction observer was proposed in the previous study, but there was an intractable input ($\tau_k$ in Fig. \ref{fig:friction_observers}c) which cannot be implemented around the equilibrium point. For this reason, $\tau_k$ has been neglected under the expectation that it has little influence on stability. This paper theoretically clarifies that $\tau_k$ can be neglected in general. The proposed approach has another advantage that it provides a better understanding on the stiction compensation during which the friction observers tend to generate energy. In particular, it is possible to design a passive friction observer that does not generate  energy. The proposed scheme is validated through simulations and experiments.

\appendix
To show practical stability when the tracking is of interest, singular perturbation theory (in particular, Tikhonov's Theorem) can be applied \cite{khalil2002nonlinear}. Consider the PID-type $C(s)$ in Corollary \ref{cor:pid}. Letting $L=\frac{1}{\epsilon}I$, the boundary layer system can be obtained by taking time derivative of $\widehat{\tau}_f$:
\begin{align}
\epsilon\dot{\widehat{\tau}}_f + \widehat{\tau}_f = \tau_f + L_p \dot{e}_{nr} + L_i e_{nr}.
\label{eq:boundary_layer}
\end{align}
In (\ref{eq:boundary_layer}), as $\epsilon \rightarrow 0$, exponential convergence of $\widehat{\tau}_f=\tau_f + L_p \dot{e}_{nr} + L_i e_{nr}$ is achieved in the fast time scale $t/\epsilon$ (notice that $e_{nr},\dot{e}_{nr},\tau_f$  are frozen variables in the fast time scale). Substituting this into (\ref{eq:deriv_diff_dyn}), the reduced system is 
\begin{align}
\ddot{e}_{nr} + L_p \dot{e}_{nr} + L_i e_{nr} = 0
\label{eq:reduced_system}
\end{align}
together with (\ref{eq:deriv_link})-(\ref{eq:deriv_nom_motor}). Assuming that the reduced system has a unique solution (which is true, for instance, if it is Lipschitz continuous \cite{khalil2002nonlinear}), the Tikhonov's Theorem states that the trajectory of original system approaches to the unique solution as $\epsilon \rightarrow 0$. Note that the solution almost corresponds to the ideal behavior because $e_{nr}=0$ is achieved exponentially in (\ref{eq:reduced_system}). An analysis for the PD-type $C(s)$ can be performed similarly.

% use section* for acknowledgement

\section*{Acknowledgment}
We thank to Julian Klodmann for his great support in experimental setup and for  interesting discussions.
%

% Can use something like this to put references on a page
% by themselves when using endfloat and the captionsoff option.
\ifCLASSOPTIONcaptionsoff
  \newpage
\fi

\bibliographystyle{IEEEtran}
%\balance
\bibliography{IEEEabrv,ForArxiv}

\end{document}